\documentclass[acmlarge, nonacm]{acmart}
\makeatletter
\newcommand{\confshort}{\acmConference@shortname}
\newcommand{\conffull}{\acmConference@name}
\newcommand{\confdate}{\acmConference@date}
\newcommand{\confloc}{\acmConference@venue}
\AtBeginDocument{
  \fancypagestyle{firstpagestyle}{
    \fancyhead{}%
    \fancyfoot[C]{}%
  }
  \fancyhf{}
  \fancyhead[LO]{\@headfootfont\shorttitle}%
  \fancyhead[RE]{\@headfootfont\@shortauthors}%
  \fancyhead[LE]{\@headfootfont\footnotesize \confshort, \confdate, \confloc}%
  \fancyhead[RO]{\@headfootfont\footnotesize \confshort, \confdate, \confloc}%
  \fancyfoot[C]{}%
}
\makeatother
\acmBooktitle{\conffull\@ (\confshort), \confdate, \confloc}

\copyrightyear{2026}
\acmYear{2026}
\setcctype{by}
\acmConference[FAccT '26]{The 2026 ACM Conference on Fairness, Accountability, and Transparency}{June 25--28, 2026}{Montr\'eal, Canada}
\acmDOI{10.1145/3805689.3806541}
\acmISBN{979-8-4007-2596-8/2026/06}

\usepackage{algorithm}

\usepackage{amsmath}
\usepackage{booktabs}
\usepackage{amsthm}
\usepackage{array}
\usepackage{xcolor}

\usepackage{dsfont}
\usepackage{amsfonts}
\usepackage{multirow}
\usepackage{wrapfig}

\usepackage{algpseudocode}
\usepackage[most]{tcolorbox}

\newtcolorbox{highlightbox}{
  colback=gray!8,
  colframe=black!35,
  boxrule=0.6pt,
  arc=3pt,
  left=6pt,right=6pt,top=6pt,bottom=6pt
}

\newtheorem{proposition}{Proposition}[section]
\newtheorem{corollary}{Corollary}[section] 

\begin{document}

\title[First-See-Then-Design Framework]
{First-See-Then-Design: A Multi-Stakeholder View for Optimal Performance-Fairness Trade-Offs}

\newcommand{\needcite}{ {\color{pink}[cite]}  }

\newcommand\VRule[1][\arrayrulewidth]{\vrule width #1}

\newcommand{\KG}[1]{[\textcolor{red}{KG :#1}]}
\newcommand{\NK}[1]{[\textcolor{green}{NK :#1}]}
\newcommand{\IV}[1]{\textcolor{violet}{#1}}
\newcommand{\CH}[1]{\textcolor{orange}{CH: #1}]}

\newcommand{\N}{\mathbb{N}}
\newcommand{\R}{\mathbb{R}}
\newcommand{\Q}{\mathbb{Q}}
\newcommand{\Z}{\mathbb{Z}}
\newcommand{\x}{\times}						
\newcommand{\id}{\mathrm{id}}
\newcommand{\Acal}{\mathcal{A}}
\newcommand{\Bcal}{\mathcal{B}}
\newcommand{\Ccal}{\mathcal{C}}
\newcommand{\Dcal}{\mathcal{D}}
\newcommand{\Ecal}{\mathcal{E}}
\newcommand{\Fcal}{\mathcal{F}}
\newcommand{\Gcal}{\mathcal{G}}
\newcommand{\Hcal}{\mathcal{H}}
\newcommand{\Ical}{\mathcal{I}}
\newcommand{\Jcal}{\mathcal{J}}
\newcommand{\Kcal}{\mathcal{K}}
\newcommand{\Lcal}{\mathcal{L}}
\newcommand{\Mcal}{\mathcal{M}}
\newcommand{\Ncal}{\mathcal{N}}
\newcommand{\Ocal}{\mathcal{O}}
\newcommand{\Pcal}{\mathcal{P}}
\newcommand{\Qcal}{\mathcal{Q}}
\newcommand{\Rcal}{\mathcal{R}}
\newcommand{\Scal}{\mathcal{S}}
\newcommand{\Tcal}{\mathcal{T}}
\newcommand{\Ucal}{\mathcal{U}}
\newcommand{\Vcal}{\mathcal{V}}
\newcommand{\Wcal}{\mathcal{W}}
\newcommand{\Xcal}{\mathcal{X}}
\newcommand{\Ycal}{\mathcal{Y}}
\newcommand{\Zcal}{\mathcal{Z}}
\newcommand{\fb}{\mathfrak{b}}

\newcommand{\Abf}{\mathbf{A}}
\newcommand{\Bbf}{\mathbf{B}}
\newcommand{\Cbf}{\mathbf{C}}
\newcommand{\Dbf}{\mathbf{D}}
\newcommand{\Ebf}{\mathbf{E}}
\newcommand{\Fbf}{\mathbf{F}}
\newcommand{\Gbf}{\mathbf{G}}
\newcommand{\Hbf}{\mathbf{H}}
\newcommand{\Ibf}{\mathbf{I}}
\newcommand{\Jbf}{\mathbf{J}}
\newcommand{\Kbf}{\mathbf{K}}
\newcommand{\Lbf}{\mathbf{L}}
\newcommand{\Mbf}{\mathbf{M}}
\newcommand{\Nbf}{\mathbf{N}}
\newcommand{\Obf}{\mathbf{O}}
\newcommand{\Pbf}{\mathbf{P}}
\newcommand{\Qbf}{\mathbf{Q}}
\newcommand{\Rbf}{\mathbf{R}}
\newcommand{\Sbf}{\mathbf{S}}
\newcommand{\Tbf}{\mathbf{T}}
\newcommand{\Ubf}{\mathbf{U}} %
\newcommand{\Vbf}{\mathbf{V}}
\newcommand{\Wbf}{\mathbf{W}}
\newcommand{\Xbf}{\mathbf{X}}
\newcommand{\Ybf}{\mathbf{Y}}
\newcommand{\Zbf}{\mathbf{Z}}

\newcommand{\Mbb}{\mathbb{M}}
\newcommand{\Ibb}{ \mathbb{I} }

\newcommand{\Leb}{\lambda\hspace{-5.3pt}\lambda}

\newcommand{\sm}{\setminus}						
\newcommand{\ins}{\subseteq} 					
\newcommand{\sni}{\supseteq} 					
\newcommand{\cmpl}{\mathsf{c}}

\newcommand{\pr}{\mathrm{pr}}
\newcommand{\ev}{\mathrm{ev}} 
\newcommand{\pf}{\mathrm{pf}} 
\newcommand{\incl}{\mathrm{incl}}
\newcommand{\two}{\mathbf{2}}
\newcommand{\one}{\mathbf{1}}

\newcommand{\srj}{\twoheadrightarrow}
\newcommand{\inj}{\hookrightarrow}
\newcommand{\bij}{\stackrel{\sim}{\longrightarrow}}
\newcommand{\dshto}{\dashrightarrow}
\newcommand{\dshot}{\dashlefttarrow}

\renewcommand{\Pr}{\mathbb{P}} 	
\newcommand{\Var}{\mathrm{Var}}
\newcommand{\Bias}{\mathrm{Bias}}
\newcommand{\Noise}{\mathrm{Noise}}

\newcommand{\E}{\mathbb{E}}
\newcommand{\I}{\mathbbm{1}}
\newcommand{\Pa}{\mathrm{Pa}} 		
\newcommand{\pa}{\mathrm{pa}} 		
\newcommand{\Ch}{\mathrm{Ch}} 		
\newcommand{\Anc}{\mathrm{Anc}} 		
\newcommand{\AnCl}{\mathrm{AnCl}} 
\newcommand{\Desc}{\mathrm{Desc}} 	
\newcommand{\Dist}{\mathrm{Dist}} 
\newcommand{\Pred}{\mathrm{Pred}} 
\newcommand{\Sc}{\mathrm{Sc}}
\newcommand{\NonDesc}{\mathrm{NonDesc}}

\newcommand{\bigdcup}{\mathop{\dot{\bigcup}}}  

\newcommand{\dcup}{\,\dot{\cup}\,}

\newcommand{\ReLU}{\mathrm{ReLU}} 

\newcommand{\lp}{\left ( }
\newcommand{\rp}{\right ) }
\newcommand{\lA}{\left\langle}
\newcommand{\rA}{\right\rangle}
\newcommand{\lB}{\left [ }
\newcommand{\rB}{\right ] }
\newcommand{\lC}{\left \{ }
\newcommand{\rC}{\right \} }
\newcommand{\lI}{\left| }
\newcommand{\rI}{\right| }

\newcommand{\doit}{ {   \operatorname{do} } }

\newcommand*{\tut}[1][]{\mathrel{\tikz [baseline=-0.25ex,-, #1] \draw [#1] (0pt,0.5ex) -- (1.3em,0.5ex);}}
\newcommand*{\tnt}[1][]{\mathrel{\tikz [baseline=-0.25ex,-, #1] \draw [#1] (0pt,0.5ex) -- node[strike out,draw,-]{} (1.3em,0.5ex);}}
\newcommand*{\tuh}[1][]{\mathrel{\tikz [baseline=-0.25ex,-latex, #1] \draw [#1] (0pt,0.5ex) -- (1.3em,0.5ex);}}
\newcommand*{\hut}[1][]{\mathrel{\tikz [baseline=-0.25ex,latex-, #1] \draw [#1] (0pt,0.5ex) -- (1.3em,0.5ex);}}
\newcommand*{\huh}[1][]{\mathrel{\tikz [baseline=-0.25ex,latex-latex, #1] \draw [#1] (0pt,0.5ex) -- (1.3em,0.5ex);}}
\newcommand*{\ouo}[1][]{\mathrel{\tikz [baseline=-0.25ex,-, #1] \draw [
decoration={markings, mark=at position 0 with {\draw circle (1pt);}, 
mark=at position 1 with {\draw circle (1pt);}}, postaction={decorate},#1] (0pt,0.5ex) -- (1.3em,0.5ex);}}
\newcommand*{\tuo}[1][]{\mathrel{\tikz [baseline=-0.25ex,-, #1] \draw [
decoration={markings, mark=at position 1 with {\draw circle (1pt);}}, postaction={decorate},#1] (0pt,0.5ex) -- (1.3em,0.5ex);}}
\newcommand*{\huo}[1][]{\mathrel{\tikz [baseline=-0.25ex,latex-, #1] \draw [
decoration={markings, mark=at position 1 with {\draw circle (1pt);}}, postaction={decorate},#1] (0pt,0.5ex) -- (1.3em,0.5ex);}}
\newcommand*{\out}[1][]{\mathrel{\tikz [baseline=-0.25ex,-, #1] \draw [
decoration={markings, mark=at position 0 with {\draw circle (1pt);}}, postaction={decorate},#1] (0pt,0.5ex) -- (1.3em,0.5ex);}}
\newcommand*{\ouh}[1][]{\mathrel{\tikz [baseline=-0.25ex,-latex, #1] \draw [
decoration={markings, mark=at position 0 with {\draw circle (1pt);}}, postaction={decorate},#1] (0pt,0.5ex) -- (1.3em,0.5ex);}}
\newcommand*{\toh}[1][]{\mathrel{\tikz [baseline=-0.25ex,-latex, #1] \draw [
decoration={markings, mark=at position 0.4 with {\draw[fill] circle (0.8pt);}},
        postaction={decorate},#1] (0pt,0.5ex) -- (1.3em,0.5ex);}}
\newcommand*{\hot}[1][]{\mathrel{\tikz [baseline=-0.25ex,latex-, #1] \draw [
decoration={markings, mark=at position 0.6 with {\draw[fill] circle (0.8pt);}},
        postaction={decorate},#1] (0pt,0.5ex) -- (1.3em,0.5ex);}}
\newcommand*{\tot}[1][]{\mathrel{\tikz [baseline=-0.25ex,-, #1] \draw [
decoration={markings, mark=at position 0.5 with {\draw[fill] circle (0.8pt);}},
        postaction={decorate},#1] (0pt,0.5ex) -- (1.3em,0.5ex);}}				
\newcommand*{\hoh}[1][]{\mathrel{\tikz [baseline=-0.25ex,latex-latex, #1] \draw [
decoration={markings, mark=at position 0.5 with {\draw[fill] circle (0.8pt);}},
        postaction={decorate},#1] (0pt,0.5ex) -- (1.3em,0.5ex);}}

\newtheorem{sa}{Theorem}[section]
\newtheorem{Thm}[sa]{Theorem}
\newtheorem{Lem}[sa]{Lemma}
\newtheorem{Prp}[sa]{Proposition}
\newtheorem{Cor}[sa]{Corollary}
\newtheorem{Con}[sa]{Conjecture}
\newtheorem{Fct}[sa]{Facts}
\newtheorem{Prn}[sa]{Principle}

\newtheorem{Def}[sa]{Definition}
\newtheorem{DefLem}[sa]{Definition/Lemma}
\newtheorem{Axm}[sa]{Axiom}
\newtheorem{Not}[sa]{Notation}
\newtheorem{Asm}[sa]{Assumptions}

\newtheorem{Rem}[sa]{Remark}
\newtheorem{Cau}[sa]{Caution}
\newtheorem{Eg}[sa]{Example}
\newtheorem{Tho}[sa]{Thoughts}

\newcommand{\TightenPar}[1]{\looseness=-#1}

\newtheorem{remark}{Remark}

\author{Kavya Gupta}
\email{gupta@cs.uni-saarland.de}
\affiliation{%
  \institution{ Department of Computer Science, Saarland University}
  \city{Saarbrücken}
  \country{Germany}
}

\author{Nektarios Kalampalikis}
\email{kalampalikis@cs.uni-saarland.de}
\affiliation{%
  \institution{ Department of Computer Science, Saarland University}
  \city{Saarbrücken}
  \country{Germany}}

\author{Christoph Heitz}
\email{christoph.heitz@zhaw.ch}
\affiliation{%
  \institution{Zurich University of Applied Sciences}
  \city{Zurich}
  \country{Switzerland}
}

\author{Isabel Valera}
\email{ivalera@cs.uni-saarland.de}
\affiliation{%
 \institution{ Department of Computer Science, Saarland University}
  \city{Saarbrücken}
  \country{Germany}}

\renewcommand{\shortauthors}{Gupta et al.}

\begin{abstract}
Fairness in algorithmic decision-making is often defined in the predictive space, where predictive performance — used as a proxy for decision-maker (DM) utility — is traded off against prediction-based fairness notions, such as demographic parity or equality of opportunity. This perspective, however, ignores how predictions translate into decisions and ultimately into utilities and welfare for both DM and decision subjects (DS), as well as their allocation across social-salient groups.

In this paper, we propose a \emph{multi-stakeholder framework for fair algorithmic decision-making} grounded in welfare economics and distributive justice, explicitly modeling the utilities of both the DM and DS, and defining fairness via a social planner’s utility that captures inequalities in DS utilities across groups under different justice-based fairness notions (e.g., Egalitarian, Rawlsian). We formulate fair decision-making as a post-hoc multi-objective optimization problem, characterizing the achievable performance-fairness trade-offs in the two-dimensional utility space of DM utility and the social planner's utility, under different decision policy classes (deterministic vs. stochastic, shared vs. group-specific). Using the proposed framework, we then identify conditions (in terms of the stakeholders' utilities) under which stochastic policies are more optimal than deterministic ones, and empirically demonstrate that simple stochastic policies can yield superior performance–fairness trade-offs by leveraging outcome uncertainty. Overall, we advocate a shift from prediction-centric fairness to a \emph{transparent, justice-based, multi-stakeholder approach that supports the collaborative design of decision-making policies}.
\end{abstract}

\begin{CCSXML}
<ccs2012>
   <concept>
       <concept_id>10010147.10010257</concept_id>
       <concept_desc>Computing methodologies~Machine learning</concept_desc>
       <concept_significance>500</concept_significance>
       </concept>
   <concept>
       <concept_id>10003456</concept_id>
       <concept_desc>Social and professional topics</concept_desc>
       <concept_significance>500</concept_significance>
       </concept>
   <concept>
       <concept_id>10003752.10003809</concept_id>
       <concept_desc>Theory of computation~Design and analysis of algorithms</concept_desc>
       <concept_significance>500</concept_significance>
       </concept>
 </ccs2012>
\end{CCSXML}

\ccsdesc[500]{Computing methodologies~Machine learning}
\ccsdesc[500]{Social and professional topics}
\ccsdesc[500]{Theory of computation~Design and analysis of algorithms}

\keywords{Algorithmic fairness, Decision-making, Distributive justice, Multi-stakeholder, Pareto-optimality, Stochasticity, Utility, Welfare}
\maketitle
\noindent\textbf{Copyright notice.} © ACM 2026. This is the author's version of the work. 
It is posted here for your personal use. Not for redistribution. 
The definitive Version of Record was published in 
Proceedings of the 2026 ACM Conference on Fairness, Accountability, and Transparency (FAccT '26), 
\url{https://doi.org/10.1145/3805689.3806541}.
\section{Introduction}
Fairness analyses in algorithmic decision-making are most often framed through the lens of classification~\cite{barocas2023fairness,friedler2021possibility}, where  the performance and the (un)fairness of an algorithm (in this case, a classifier) are measured in terms of its predictions. For example, the performance of an algorithm is often measured in terms of accuracy, while its fairness is assessed in terms of disparity between, e.g., positive rates (i.e., demographic parity) or true positive rates (i.e., equality of opportunity)~\cite{hardt2016equality,zafar2017fairness,pleiss2017fairness} across socio-demographic groups.  
In this paper, we refer this approach to fairness as \emph{prediction-centric} as it is 
characterized by two key characteristics: 
(i) the predictions of the classifier are treated as proxies for the decision-maker's utility (e.g., profit, risk minimization); and,  
ii) both performance and fairness are defined and optimized in the \emph{predictive space}, i.e., in terms of the algorithm’s predictions rather than on the downstream consequences of the decisions made using those predictions. 
Predictions are first transformed into decisions through a decision policy, and these decisions then induce utilities and welfare outcomes for affected individuals. %
By collapsing this prediction-decision-utility pipeline \cite{beigang2022advantages}  into predictive metrics, prediction-centric approaches implicitly assume that fairness in predictions translates into fairness in outcomes,  an assumption that does not generally hold \cite{scantamburlo2025prediction,liang2024algorithmic}.

A growing body of work has challenged this perspective, advocating instead for the adoption of a utility-based fairness framework grounded in welfare economics~\cite{heidari2018fairness,lee2021formalising} and distributive justice~\cite{rawls2017theory,roemer2015equality,lamont2017distributive,hertweck2023justice}. These approaches argue that fairness should be assessed in terms of outcomes, that is, how decisions distribute benefits and harms across individuals and groups, rather than in terms of predictive proxies.
 Accordingly, fairness interventions should target disparities in the utilities induced by decision-making policies among social-salient groups, rather than disparities in predictions alone.
As emphasized in \cite{rosenfeld2025machine}, the focus should be on maximizing stakeholders’ utilities induced by the decision-making policy, rather than solely the predictive accuracy of the underlying machine learning model: What ultimately matters is how a decision-making policy allocates utilities among affected stakeholders. 
Inspired by these insights, we propose \textbf{a multi-stakeholder framework for fair algorithmic decision-making}, illustrated in Figure~\ref{fig:MOO_view}. 

First, we advocate for explicitly defining and empirically measuring the utilities associated with all relevant stakeholders in the decision-making pipeline, which we assume here to be:
(i) \textit{the decision maker (DM)}, whose objective is to deploy a decision policy that maximizes its utility function ($U_{DM}$) (e.g., expected profit);
(ii) \textit{the decision subjects (DS)}, individuals affected by the decisions, who experience outcomes through their own utility function ($U_{DS}$); and
(iii) \textit{a social planner (SP)}, whose objective is to promote social welfare, represented by a utility function ($U_{SP}$) that quantifies the (un)fairness of a decision policy through the inequality in utilities experienced by DS across different social-salient groups. The SP operationalizes fairness by aggregating or comparing the utilities of DS across social-salient groups according to a chosen theory of justice (e.g., Egalitarian~\cite{arneson2002egalitarianism}, Rawlsian~\cite{rawls2001justice,rawls2017theory} principles). %
\emph{By construction, this formulation directly integrates fairness into the utility space.} \TightenPar{1} %

Second,  we build on this stakeholder decomposition to adopt a {post-hoc multi-objective optimization (MOO)~\cite{kang2024survey}} perspective, jointly optimizing the utilities of the DM and the SP to characterize the achievable Pareto front in the {utility space}, thereby \emph{making the trade-off between performance (captured by $U_{DM}$) and fairness (captured by $U_{SP}$) explicit and transparent} to all stakeholders. 
Crucially, this post-hoc MOO analysis allows stakeholders to first examine how different policies %
affect the attainable trade-offs under different fairness notions. 
This allows stakeholders to make informed decisions about which decision to deploy, based on their normative commitments, regulatory constraints, and operational priorities.

\textbf{Note that the goal of our framework is not to prescribe a single “optimal” policy, but rather to expose the full set of Pareto-optimal alternatives \cite{wei2022fairness}, thus enabling an informed and transparent discussion among stakeholders, who can then jointly decide which decision policy to deploy.} 

\begin{figure}[h!]
\begin{minipage}{0.4\textwidth}
     \centering    \includegraphics[width=0.9\linewidth]{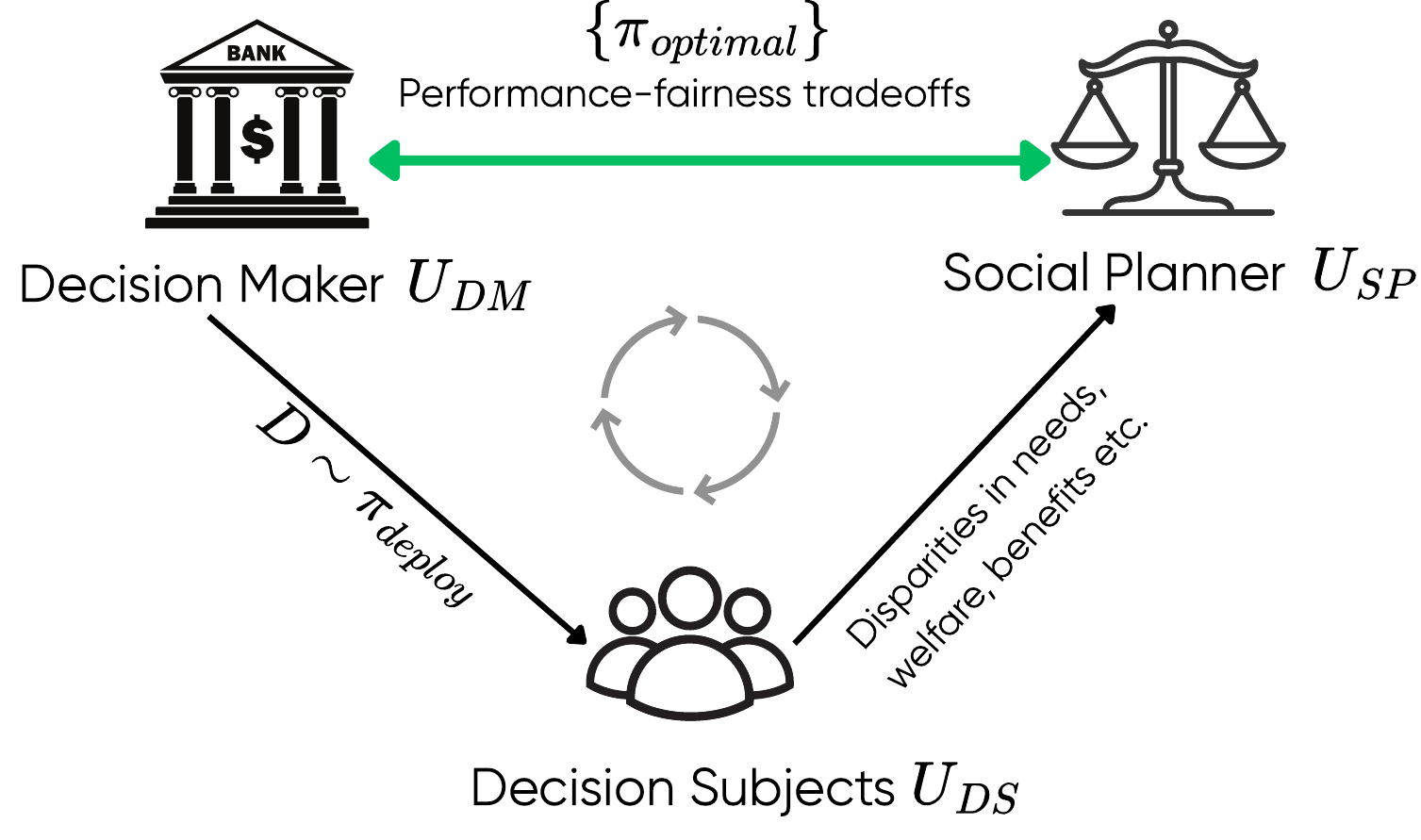}
    \vspace{-7pt}\caption{%
    Multi-stakeholder view of decision-making policies.
    DM deploys policy $\pi$ to make decisions aiming to maximize its utility $U_{DM}$. The decisions affect DS, whose welfare $U_{DS}$ may vary across social groups. The SP evaluates policies based on disparities in group-level welfare, contributing to a distinct utility $U_{SP}$.
    }
    \label{fig:MOO_view}
\end{minipage}\hfill
   \begin{minipage}{0.57\textwidth}
   \centering
    \includegraphics[width=1.01\linewidth]{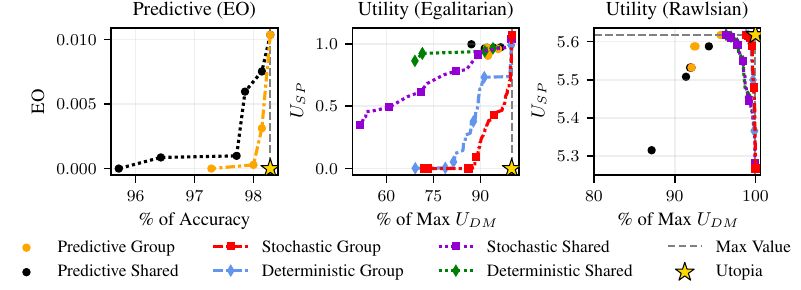}
    \caption{Pareto fronts for different policy classes in predictive space (left) and utility space (right) for different notions of distributive justice. Projected predictive-space policies are suboptimal in utility space %
    whereas deterministic and stochastic utility-space policies achieve better trade-offs.}
    \label{fig:introgerman_plot}
    \end{minipage}
\end{figure}

\subsection{Motivating Example} 
We illustrate the need for the proposed multi-stakeholder, utility-based approach to algorithmic fairness with an example in the context of lending decisions in the German credit dataset~\cite{hofmann1994statlog}. 
Figure~\ref{fig:introgerman_plot} first presents the trade-offs in the predictive space between predictive performance (accuracy) and a prediction-based fairness notion~\cite{nagpal2025optimizing,liang2022algorithmic}, here equality of opportunity. We consider two families of threshold-based decision policies: shared policies, which do not condition on sensitive attributes at decision time, and group-specific policies, which do. For each policy family, Figure~\ref{fig:introgerman_plot} (left) shows the predictive Pareto front. 
We then project the predictive Pareto-optimal policies into the utility space and compare them with policies obtained by directly optimizing utilities.
We instantiate $U_{DM}$ to measure the expected profit as difference between the expected simple interest and the loss given default,  and adopt two notions of justice-based fairness to measure $U_{SP}$: a) \emph{Egalitarian}, thus measuring disparities in $U_{DS}$ across social-salient groups; and b) \emph{Rawlsian}, focusing on the minimum $U_{DS}$ across social-salient groups. The utopia point denotes the theoretical (even though generally unattainable) target of the two-dimensional optimization. 
This comparison yields three key observations:

    \vspace{-7pt}
    \paragraph{ \textbf{-- Predictive trade-offs do not {necessarily} transfer to utility space.}} Policies that are Pareto-optimal in the predictive space occupy a small, often suboptimal, region of the utility space. 
    Instead, directly optimizing utilities yields  substantially richer Pareto fronts, expanding the set of viable policies available at deployment. {Importantly, we do not view prediction-centric fairness notions as flawed: for certain configurations, our framework recovers standard predictive fairness criteria. Rather, predictive metrics capture only part of the decision pipeline, and their implications depend on downstream utility effects.} This perspective complements prior work highlighting the mismatch between predictive metrics and downstream welfare outcomes~\cite{barocas2023fairness, mehrabi2021survey,chouldechova2017fair,obermeyer2019dissecting}.
    
    \vspace{-7pt}
   \paragraph{ \textbf{-- The nature of trade-offs depends on the chosen notion of justice and utility design.}} The existence and shape of performance-fairness trade-offs are highly sensitive to how utilities are defined and how fairness is operationalized (here, Egalitarian versus Rawlsian). In our example, under a Rawlsian notion of fairness, group-specific policies can achieve near-maximal decision-maker utility (approximately 98\% of the maximum). Under the Egalitarian notion, however, achieving zero unfairness requires a substantially larger reduction in decision-maker utility (below 75\% of its maximum). This shows that imposing fairness constraints \emph{a priori} without first examining the attainable trade-offs can be misleading. 
    
    \vspace{-7pt}
    \paragraph{ \textbf{-- Performance-fairness trade-offs  depends on the policy parameterization.} }
    The achievable trade-offs are strongly affected by how decision policies are parameterized. In our example, shared policies are unable to achieve near zero Egalitarian unfairness without substantial loss in decision-maker utility, whereas group-specific policies can. Moreover, under Egalitarian fairness, stochastic policies strictly dominate deterministic threshold-based policies,  %
    improving fairness for a given level of decision-maker utility. These results thus support our claims on the need of making the performance-fairness trade-offs transparent prior to the design of the deployed policy.

Although these observations may initially seem anecdotal, we show in the remainder of the paper, both in our theoretical and empirical results,
 that their implications are in fact systematic and substantive.

\pagebreak
\subsection{Related Work}

Recent work highlights that fairness in decision-making should consider multiple stakeholders, each with potentially conflicting utilities, such as decision-makers, individuals, and regulators~\cite{heidari2019moral, hertweck2023justice}. This motivates the use of MOO as a principled framework. MOO can surface trade-offs between performance and fairness, rather than enforcing fairness as a hard constraint on a single objective~\cite{zafar2017fairness, martinez2020minimax}. Beyond parity-based metrics, research emphasizes utility-based and justice-aware fairness, focusing on real-world impacts and the moral claims of affected individuals~\cite{heidari2018fairness, hertweck2023justice}. Recent frameworks explicitly connect algorithmic fairness to distributive justice principles, such as Egalitarianism, Rawlsian max–min fairness, and prioritarianism~\cite{hertweck2023justice, casacuberta2023augmenting}. These frameworks highlight the normative assumptions in standard fairness metrics. Empirical studies show that enforcing parity-based constraints, without considering heterogeneous utilities, can unintentionally harm marginalized populations~\cite{hu2020fair, corbett2017algorithmic}. 
Randomization has been shown to be necessary for satisfying group fairness constraints in classification~\cite{celis2019classification}, exposure and allocation problems~\cite{wang2021stochastic}, selective label settings~\cite{kilbertus2020fair, wei2021decision}, and bandit learning~\cite{joseph2016fairness}. Our work builds on these insights by providing a unified geometric characterization of when and why stochastic policies expand attainable performance-fairness trade-offs in static decision-making. Unlike prior work, we focus on post-hoc, utility-space Pareto analysis, showing how stochasticity interacts with stakeholder utilities and justice notions to shape the geometry of achievable trade-offs. A more detailed discussion of related work is provided in Appendix~\ref{app:related_work}.

\section{Background}\label{sec:background}
\subsection{Decision-maker Optimal decision-making policy}
We adopt the perspective of a decision maker 
whose objective is to maximize expected utility. Following~\cite{kilbertus2020fair, corbett2017algorithmic}, we define the DM’s utility under a decision policy $\pi$ as the expected utility induced by the decisions made using that policy:
\begin{align}
\label{eqn:udm}
U_{DM}(\pi) 
&= \mathbb{E}_{(X, S, Y) \sim \mathbb{P}, \, D \sim \pi(\cdot|X, S)} 
\left[ u_{DM}(Y, X, D) \right] \nonumber \\
&= \mathbb{E}_{(X, S)\sim \mathbb{P}} \Big[ \pi(D{=}1 \mid X, S)  \left( 
             \mathbb{P}(Y{=}1 \mid X, S) C_{11}(X) \right. \nonumber
 - \left. \mathbb{P}(Y{=}0 \mid X, S) C_{10}(X) \right) \\
&\quad + (1 - \pi(D{=}1 \mid X, S))\left( 
             \mathbb{P}(Y{=}0 \mid X, S) C_{00}(X) \right.  
 - \left. \mathbb{P}(Y{=}1 \mid X, S) C_{01}(X) \right) \Big], 
\end{align}

where $S$ denotes sensitive attributes and \( C_{DY}(X) \) denotes the DM's cost or benefit associated with decision \( D \in \{0,1\} \) and outcome \( Y \in \{0,1\} \), potentially dependent on the non-sensitive features \( X \). This formulation generalizes standard prediction accuracy: if we assume symmetric costs (e.g., \( C_{11} = C_{00} = 1 \), \( C_{01} = C_{10} = 0 \)), then maximizing \( U_{DM} \) reduces to maximizing classification accuracy. More generally, the formulation captures asymmetries in false positives/negatives, as well as individual-specific or context-dependent utilities, which are central in many real-world decision problems such as lending~\cite{moscato2021benchmark}, hiring~\cite{mahmoud2019performance}, criminal justice~\cite{dieterich2016compas} or medical triage~\cite{habehh2021machine}.

Assuming access to the true conditional outcome probability \( \mathbb{P}(Y = 1 \mid X, S) \), the DM-optimal decision policy is given by a deterministic threshold rule:
\begin{equation}
    \pi^*_{DM}(X, S) = \mathds{1}\left\{ \mathbb{P}(Y = 1 \mid X, S) \geq \tau(X) \right\},
\end{equation}
with the optimal threshold being:
\begin{equation}
    \tau(X) = \frac{C_{00}(X) + C_{10}(X)}{C_{00}(X) + C_{11}(X) + C_{01}(X) + C_{10}(X)}.
\end{equation}
This result highlights a key distinction between prediction and decision-making: even when the prediction model is perfectly calibrated, optimal decisions depend critically on the underlying utility structure, not solely on predictive accuracy. In practice, since the true conditional distribution \( \mathbb{P}(Y \mid X, S) \) is unknown, it is approximated via a binary soft classifier \( h(X, S) \in [0,1] \), typically trained to minimize prediction loss. This motivates the following class of decision policies\TightenPar{1}:

\begin{definition}[Deterministic Threshold Policy]
    A \emph{deterministic decision policy} is defined as:
    \begin{equation}
         \pi_\tau(X, S) = \mathds{1}\{ h(X, S) \geq \tau \}, \quad \tau \in [0,1],
    \end{equation}
    where \( h(X, S) \) is a calibrated soft classifier and $\tau$ is the decision threshold that converts the score into a binary action.
\end{definition}

Such policies are optimal from the DM’s perspective given the estimated predictive probabilities and utility structure. However, as we discuss next, DM-optimal policies may induce substantial disparities in outcomes across sensitive groups particularly when label distributions, error rates, or utilities differ across groups~\cite{zafar2017fairness}. This observation motivates the need for multi-stakeholder and justice-based extensions beyond DM-optimal decision-making.

\subsection{Prediction-centric Fairness}
To mitigate disparities induced by DM-optimal decision-making, fairness-aware learning typically enforces constraints on group-level classification behavior. Among the many group fairness notions proposed in the literature, Equality of Opportunity (EO)~\cite{hardt2016equality} is particularly prominent in high-stakes decision-making settings, as it aims to ensure fair treatment of qualified individuals across sensitive groups.

\begin{definition}[Equality of Opportunity (EO)]
    A decision policy \( \pi \) satisfies EO if $\forall s, s' \in \mathcal{S}$:
    \[
        \mathbb{E}[ D = 1 \mid S = s, Y = 1 ] = \mathbb{E}[ D = 1 \mid S = s', Y = 1 ]
    \]
\end{definition}

EO enforces equality of true positive rates across groups, thereby constraining how decisions are allocated among individuals with positive outcomes. In practice, EO is often measured via the absolute disparity in true positive rates:
$$\text{EO}(\pi) = \left| \mathbb{E}[D=1 \mid S=s, Y=1] - \mathbb{E}[D=1 \mid S=s', Y=1] \right|$$

and enforced through either in-processing or post-processing interventions.
 In-processing approaches incorporate fairness constraints directly into the learning objective, encouraging the classifier to balance group-level error rates during training~\cite{zafar2017fairness}. In contrast, post-processing approaches operate on a trained classifier by applying group-dependent decision rules to equalize true positive rates without modifying the underlying predictive model~\cite{hardt2016equality, corbett2017algorithmic}. 

Within this framework, we distinguish between two classes of decision policies. A \emph{shared} policy uses a common single threshold $\tau$ and classifier 
$h$ across all groups, treating all individuals identically regardless of the sensitive attributes. In contrast, a \emph{group-specific} policy allows each group 
$s\in \mathcal{S}$ to have its own threshold $\tau_s$, enabling decisions to adapt to group-level characteristics in order to satisfy fairness constraints.

While optimizing for EO balances group-level error rates, it operates entirely in the \emph{predictive space}, focusing on statistical properties of predictions and induced decisions rather than their downstream consequences. As a result, prediction-centric fairness notions may fail to capture disparities in individual utility and well-being~\cite{baumann2022distributive, corbett2023measure} (Figure~\ref{fig:introgerman_plot}). For example, equalizing true positive rates does not guarantee that different groups derive comparable benefits from favorable decisions when utilities differ across individuals or groups.
In the next section, we address these limitations by formalizing utility-based fairness, where decision policies are evaluated and optimized directly in the utility space, accounting for the heterogeneous welfare impacts of decisions across stakeholders.

\section{A Multi-Stakeholder Approach to Algorithmic Fairness}
\label{sec:methodology}

\subsection{Justice-based Fairness and Stakeholder Utilities}

Justice-based fairness shifts the focus from predictive metrics to the \textbf{utilities induced by decisions} and the resulting welfare experienced by stakeholders. While prediction-based metrics act as proxies for impact, they often obscure the real-world consequences of decisions. Justice-based approaches instead evaluate fairness in terms of how decision policies distribute benefits, harms  and welfare across individuals and groups~\cite{hertweck2024group}. In this work, we make this shift explicit by modeling stakeholder utilities directly and defining fairness objectives grounded in utility and distributive justice. \TightenPar{1}
\paragraph{Decision Subject Utility.} Let \( u_{DS}: \{0,1\} \times \{0,1\} \rightarrow \mathbb{R} \) denote the per-instance utility experienced by the decision subject, where \( u_{DS}(y, d) \) captures the benefit or harm of decision \( d \) given outcome \( y \). The expected utility of decision subjects belonging to a group \( s \in \mathcal{S} \) under decision policy \( \pi \) is:
\begin{equation}
U_{DS}^s(\pi)
= \mathbb{E}_{(X,Y)\sim \mathbb{P}(\cdot\mid S=s),\; D\sim \pi(\cdot\mid X,S=s)}
\big[ u_{DS}(Y, D) \big].
\end{equation}
making explicit that decision-subject utilities may differ systematically across socio-demographic groups. Such heterogeneity is common in practice. 
For example, the rejection of a qualified applicant by a hiring or lending system may impose substantially higher costs on individuals from under-resourced or historically disadvantaged groups, who often have fewer alternative opportunities. In contrast, the same decision may be less harmful to individuals with greater access to resources or support networks. These asymmetries highlight why fairness should be evaluated in terms of \emph{realized utility} rather than parity of predictions or error rates~\cite{hu2020fair, baumann2022distributive}.

\paragraph{Fairness as Regulator (Social Planner) Utility} We model fairness from the perspective of a regulator or social planner who evaluates decision policies based on how utilities are distributed across groups. Let $\mathbf U_{DS}(\pi)=(U_{DS}^1(\pi),\dots,U_{DS}^{|\mathcal S|}(\pi))$, denote the vector of group-wise decision-subject utilities induced by policy 
$\pi$, the social planner’s utility is defined as,
\begin{equation}
U_{SP}(\pi) = \mathcal R\left( \mathbf U_{DS}(\pi) \right),
\end{equation}
where $\mathcal R$ encodes a notion of distributive justice and quantifies the (un)fairness of a policy through inequalities in group utilities. This formulation captures the idea that fairness is not an intrinsic property of predictions, but a normative evaluation of outcome distributions~\cite{rambachan2020economic}.
Different theories of justice correspond to different choices of $\mathcal R$. Examples include\TightenPar{1}:
\begin{itemize}
    \item \textbf{Egalitarianism}~\cite{arneson2002egalitarianism}, which penalizes inequality in utilities, $
    \mathcal R_{\mathrm{Egal}}(\mathbf U_{DS}(\pi)) = 
    \sum_{s,s'} |U_{DS}^s(\pi) - U_{DS}^{s'}(\pi)|,
    $
    \item \textbf{Rawlsian max–min fairness}~\cite{rawls2001justice, rawls2017theory}, that prioritizes the worst-off group,
$\mathcal R_{\mathrm{Rawls}}(\mathbf U_{DS}(\pi)) = \min_s U_{DS}^s(\pi).$
\end{itemize}
While other justice principles such as prioritarianism~\cite{holtug2017prioritarianism} or sufficientarianism~\cite{shields2020sufficientarianism} can also be accommodated, our theoretical and empirical analysis focuses on Egalitarian and Rawlsian objectives as canonical and widely adopted formulations. Importantly, in the remainder of the paper we treat $\mathcal R$ abstractly and require only mild regularity conditions such as convexity or concavity which play a central role in characterizing the geometry of achievable performance-fairness trade-offs, which is central to our theoretical analysis. 

\subsection{Performance-Fairness Trade-offs: Formalizing the Gap}
We formalize performance-fairness trade-offs in a multi-stakeholder framework over decision policies. This section establishes a common formal language for comparing predictive and utility-based notions of optimality, defining the policy classes, objective spaces, Pareto fronts, and projections that will be analyzed theoretically in the next section. \TightenPar{1}
\begin{definition}[Policy Class]
    Let $\Pi$ denote the class of %
    decision policies induced by a scoring function $h$, 
where $h:(X\times S)\to[0,1]$ is a (possibly group-aware) score.
Each policy $\pi \in \Pi$ is evaluated according to a two-dimensional objective $f(\pi) = (P(\pi),F(\pi)) \in \mathbb{R}^2$, where 
$P(\pi)$ measures performance and $F(\pi)$ measures fairness. Both objectives are assumed to be maximized.
\end{definition}

\begin{definition}[Pareto Optimality]
Given two objective vectors $z,z'\in \mathbb{R}^2$, we say that $z$ strictly Pareto-dominates $z'$, denoted $z \succ z'$, if
$z_i \ge z'_i$ for all $i\in\{1,2\}$ and there exists $j\in\{1,2\}$ such that $z_j > z'_j$.
\end{definition}

\begin{definition}[Pareto-optimal set (PS)]
A policy $\pi \in \Pi$ is Pareto-optimal if no other policy strictly dominates it: $PS(f,\Pi)=\{ \pi \in \Pi \mid \nexists \pi' \neq \pi \quad s.t. \quad f(\pi') \succ f(\pi) \}$.
\end{definition}

\begin{definition}[Pareto Front (PF)]
The corresponding Pareto front is the image of this set in objective space: $PF(f, \Pi) = \{ f(\pi) \mid \pi \in PS(f,\Pi)\}$.  Each point on the Pareto front represents an efficient performance-fairness trade-off, in the sense that improving one objective necessarily degrades the other. The front therefore defines the boundary of
efficient policies from which a decision-maker or social planner may select according to stakeholder priorities.\TightenPar{1}
\end{definition}

\subsection{Predictive Space vs. Utility Space}
We consider two distinct objective spaces depending on how performance and fairness are defined:
\begin{itemize}
\item \textbf{Predictive space:} $f^{pred}(\pi)=(Acc(\pi),EO(\pi))$,
where performance and fairness are measured using predictive metrics, with $PS^{pred} = PS(f^{pred},\Pi),\quad PF^{pred}=PF(f^{pred},\Pi)$ for policy class $\Pi$.%
\item \textbf{Utility space:} $f^{util}(\pi)=(U_{DM}(\pi),U_{SP}(\pi))$,
where performance and fairness are evaluated directly in terms of stakeholder utilities, with $PS^{util} = PS(f^{util},\Pi),\quad PF^{util}=PF(f^{util},\Pi)$.
\end{itemize}

\paragraph{Projecting Predictive-Optimal Policies into Utility Space. }
Policies that are Pareto-optimal in predictive space induce a corresponding set of utility outcomes. {In practice, this projection is an evaluation step: we first identify the Pareto-optimal policies in predictive space, each corresponding to a specific parameter configuration, and then evaluate these same policies under the utility objectives while keeping their parameters fixed.}
We define the projected predictive set as: 
$
    S_{proj} =\{f^{util}(\pi)\mid \pi \in PS^{pred}\}
$.
Since policies in $PS^{pred}$ are not optimized for utility-based objectives, points in $S_{proj}$ need not be Pareto-optimal in utility space. To enable a meaningful comparison, we define the projected predictive Pareto front:
$ PF^{pred}_{proj}=PF(f^{util},S_{proj})$
which represents the best utility-space trade-offs achievable among policies that are predictive-optimal (Figure~\ref{fig:introgerman_plot}).  \TightenPar{1} 

\subsection{Expanding the Policy Class: Stochastic Policies}
To enlarge the set of attainable trade-offs, we consider a richer class of simple stochastic decision policies, that allow for probabilistic decision-making near the
threshold i.e.,
\begin{equation}
    \Pi_{\text{stoch}} = \left\{ \pi_{\beta, \gamma}(X,S) = \sigma\left( \beta (h(X,S) - \gamma) \right) \mid \beta \in \mathbb{R}_+, \gamma \in [0,1] \right\},
\end{equation}

where $\sigma(\cdot)$ is the sigmoid function. The deterministic class $\Pi_{\text{det}}$ is recovered in the limit \( \beta \to \infty \), and hence \( \Pi_{\text{det}} \subset \Pi_{\text{stoch}} \). 
Each stochastic policy induces expected utilities through randomized decisions, then:
\[
\mathbf f^{\text{util}}(\pi_{\beta,\gamma})
=\big(U_{DM}(\pi_{\beta,\gamma}),\ U_{DS}(\pi_{\beta,\gamma})\big).
\]

Let $PF^{util}_{det}$ and $PF^{util}_{stoch}$ denote the utility-space Pareto fronts under deterministic and stochastic policy classes, respectively. Then by construction, stochastic policies weakly dominate deterministic ones, that is:
 \begin{equation}
     PF^{util}_{stoch} \succeq PF^{util}_{det}
 \end{equation}
 with strict dominance arising when randomization enables trade-offs that are unattainable by deterministic thresholds this depends on the specific data distribution and utility definitions.
In Section~\ref{sec:theory}, we theoretically identify the settings where this
strict improvement holds.

\section{Understanding and Improving Utility-Space Trade-offs}
\label{sec:theory}
This section provides a theoretical analysis of performance-fairness trade-offs in the utility space. We study how the structure of stakeholder utilities and the choice of decision policy class shape the geometry of attainable trade-offs, and when stochastic decision-making can strictly improve the set of Pareto-optimal outcomes. Again as a reminder, we focus on the consequences of decisions, not on predictions.

\subsection{Utility Representation and Stakeholder Alignment}
We consider binary decisions $D \in \{0,1\}$ made on individuals characterized by features $X$ and sensitive attribute 
$S$, followed by the realization of an outcome $Y \in \{0,1\}$. Decisions precede outcomes, and utilities are induced by the joint realization of $(D,Y)$. Any (possibly stochastic) decision policy $\pi$ induces, for each sensitive group $s$, a joint distribution $p_{\pi}(d,y\mid s)=P_{\pi}(D=d,Y=y \mid S=s)$, obtained by marginalizing over the feature distribution 
$X$ and the randomness of the policy. This distribution captures how frequently each decision-outcome pair occurs for group 
$s$ under 
$\pi$, and thus summarizes the downstream consequences of the policy at the population level. 
We assume that both DM and DS utilities can be expressed as affine functions of these joint decision-outcome probabilities, i.e.,
\begin{equation}
\begin{aligned}
U_{DM}(\pi)
&= \sum_{d,y} C^{DM}_{dy} p_\pi(d,y),
\qquad
U_{DS}^s(\pi)
= \sum_{d,y} C^{DS,s}_{dy} p_\pi(d,y \mid s),
\end{aligned}
\end{equation}
where $C^{DM}_{dy}$ and $C^{DS}_{dy}$  encode the cost or benefit associated with decision 
$d$ and outcome $y$ for the DM and DS, respectively. {This affine form is an assumption, commonly adopted in welfare-based formulations of fairness~\cite{heidari2018fairness}, where stakeholder utilities are modeled as expectations over decision-outcome pairs. In particular, starting from an expectation-based utility (Eq.~\ref{eqn:udm}), taking expectation over $(X,S)$ yields a decomposition of the form: \[
P_\pi(D=d,Y=y)\,C_{dy}
= \mathbb{E}_{X,S}\big[\pi(D=d\mid X,S)\,P(Y=y\mid X,S)\,C_{dy}\big],
\] which shows that utilities can be written as linear combinations of the joint probabilities $p_\pi(d,y)$. Thus, marginalizing over $(X,S)$ recovers exactly these joint decision-outcome probabilities. } 
Once a policy fixes how often each $(d,y)$ pair occurs, all stakeholder utilities are determined by linear aggregation. This affine structure is central to our analysis, it implies that mixtures of policies correspond to convex combinations of utility outcomes, which in turn makes the geometry of utility-space Pareto fronts tractable.

\begin{proposition}[Geometry of Utility-Space Pareto Fronts]\label{prop:hull} 
Assume the utilities $U_{DM}(\pi)$ and $U_{DS}(\pi)$ to be affine functions of the joint decision-outcome probabilities,  and the SP utility measuring fairness to be computed as $U_{SP} = \mathcal{R}(U_{DS}(\pi))$, where $\mathcal{R}$ is convex and minimized (or concave and maximized). 
Then, stochastic policies strictly expand the attainable Pareto front if and only
if the deterministic front violates the convex (concave) curvature implied by $\mathcal R$. \TightenPar{1}
\end{proposition}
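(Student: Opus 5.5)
The plan is to work entirely in the space of joint decision-outcome probabilities and exploit the affine structure. Every policy $\pi$ induces a vector $p_\pi = (p_\pi(d,y\mid s))_{d,y,s}$. Both $U_{DM}$ and the vector $\mathbf U_{DS}$ are affine images of $p_\pi$. Mixing two policies $\pi_1,\pi_2$ with weight $\lambda$ (deciding via $\pi_1$ with probability $\lambda$, else via $\pi_2$) gives $p_{\lambda\pi_1+(1-\lambda)\pi_2} = \lambda p_{\pi_1} + (1-\lambda)p_{\pi_2}$. Hence
\[
\big(U_{DM},\mathbf U_{DS}\big)(\lambda\pi_1+(1-\lambda)\pi_2) = \lambda\big(U_{DM},\mathbf U_{DS}\big)(\pi_1)+(1-\lambda)\big(U_{DM},\mathbf U_{DS}\big)(\pi_2).
\]

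The first step is to show that the set of attainable $(U_{DM},\mathbf U_{DS})$ vectors under stochastic policies is the convex hull of the deterministic ones. Inclusion $\supseteq$ follows from the mixture identity. For $\subseteq$, note that a stochastic policy $\pi(D=1\mid x,s)$ is pointwise a mixture of the two deterministic actions, so it is a mixture of deterministic threshold policies whenever $\pi$ is monotone in $h$. I will state this step at the level of the mixture closure of $\Pi_{\det}$, which $\Pi_{\text{stoch}}$ is meant to approximate.

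The second step pushes this through $\mathcal R$. Take the convex-and-minimized case; the concave case is symmetric. Define the deterministic frontier function
\[
g_{\det}(u) = \min\{\mathcal R(\mathbf U_{DS}(\pi)) : \pi\in\Pi_{\det},\ U_{DM}(\pi)\ge u\},
\]
and define $g_{\text{stoch}}$ analogously over the convex hull. Convexity of $\mathcal R$ composed with affine maps gives
\[
\mathcal R\Big(\sum_i\lambda_i\mathbf U_{DS}(\pi_i)\Big)\le\sum_i\lambda_i\mathcal R(\mathbf U_{DS}(\pi_i)).
\]
This shows that $g_{\text{stoch}}$ is convex and lies below $\mathrm{conv}(g_{\det})$, the lower convex envelope. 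I will then prove the equivalence in two directions. For ($\Leftarrow$): if $g_{\det}$ is not convex at some $u$, there exist $u_1<u<u_2$ with $g_{\det}(u)>\lambda g_{\det}(u_1)+(1-\lambda)g_{\det}(u_2)$. Mixing the two attaining policies then yields DM utility $u$ and, by the Jensen-type bound, unfairness strictly below $g_{\det}(u)$, which is a strict Pareto improvement. For ($\Rightarrow$): if $g_{\det}$ is already convex, I must show $g_{\text{stoch}}=g_{\det}$, i.e., no mixture beats the deterministic front.

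The main obstacle is this ($\Rightarrow$) direction. The Jensen inequality only bounds $g_{\text{stoch}}$ from above, whereas the converse needs a lower bound, and strict convexity of $\mathcal R$ can make mixtures beat even a convex deterministic front. For example, with $\mathcal R_{\mathrm{Egal}}$, two policies with utility gaps of opposite sign can mix to a zero gap. My first attempt will be to interpret "the convex curvature implied by $\mathcal R$" as convexity of the deterministic front evaluated in the lifted space $(U_{DM},\mathbf U_{DS})$: the deterministic attainable set, followed by applying $\mathcal R$, must already coincide with its convex hull on the efficient boundary. Under that reading, equality of the stochastic and deterministic fronts follows directly, since the stochastic attainable set equals the hull. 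I will make this interpretation explicit in the proof. Where $\mathcal R$ is affine on the relevant region, for instance when the sign of $U_{DS}^s-U_{DS}^{s'}$ is constant, this reading reduces to plain convexity of $g_{\det}$.
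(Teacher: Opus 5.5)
Your forward direction (a curvature violation of $g_{\det}$ forces a strict expansion) and your identification of the stochastic attainable set with the convex hull of the lifted set $\{(U_{DM},\mathbf U_{DS})(\pi):\pi\in\Pi_{\det}\}$ are essentially the paper's argument: its mixture lemma plus Jensen for $\mathcal R$. They are sound. Like the paper, you are proving the statement for the mixture closure of $\Pi_{\det}$, not for the two-parameter sigmoid class. Keep that explicit, since an improving two-threshold mixture need not be realizable as $\sigma(\beta(h-\gamma))$.

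On the converse, your diagnosis is correct, and the gap it locates is in the paper's proof, not yours. The paper passes from ``every stochastic mixture yields a point lying weakly below the line segment'' to $\mathrm{PF}_{stoch}=\mathrm{Hull}(\mathrm{PF}_{\det})$. That inequality only shows the stochastic front is at least as good as the hull.

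Concretely, take two deterministic policies with $(U_{DM},\,U^1_{DS}-U^2_{DS})=(0,-1)$ and $(1,1)$. In $(U_{DM},U_{SP})$ coordinates under $\mathcal R_{\mathrm{Egal}}$, this gives $\mathrm{PF}_{\det}=\{(1,2)\}$, which is trivially convex and equal to its own hull. Yet the mixtures with $\lambda\in[0,\tfrac12]$ trace the new efficient segment $(1-\lambda,\,2(1-2\lambda))$ down to $(\tfrac12,0)$. So the ``only if'' is false as stated. The mechanism is non-affinity of $\mathcal R$ along the chord (the kink of $|\cdot|$, or a change of worst-off group for $\mathcal R_{\mathrm{Rawls}}$), not strict convexity, which $\mathcal R_{\mathrm{Egal}}$ does not have.

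Your lifted-space reinterpretation does not repair this. Once the stochastic attainable set is the lifted hull, the condition that applying $\mathcal R$ to it changes nothing on the efficient boundary is literally $\mathrm{PF}_{stoch}=\mathrm{PF}_{\det}$, so the converse becomes a tautology.

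The substantive version is your final remark, and it should be the theorem. Suppose $\mathcal R$ is affine on $\mathrm{conv}\{\mathbf U_{DS}(\pi):\pi\in\Pi_{\det}\}$, e.g.\ the gap has a fixed sign, or the worst-off group is fixed. Then mixtures move exactly along chords in the $(U_{DM},U_{SP})$ plane, and $g_{stoch}$ equals the convex envelope of $g_{\det}$. Hence strict expansion holds if and only if $g_{\det}$ is non-convex (non-concave in the Rawlsian case). Without that assumption only the forward implication holds, and the example above shows the converse fails.
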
 
\emph{Interpretation.}
Deterministic policies induce a discrete set (possibly non-smooth) of utility outcomes whose PF may have
gaps or incorrect curvature. Stochastic policies generate convex combinations of these
outcomes, filling in
the lower convex (upper concave) envelope of {the PF of deterministic policies ($\mathrm{PF}_{\det}$)}. %
Consequently, stochasticity does not introduce new extreme solutions; they improve trade-offs
only when the deterministic PF violates the curvature implied by the SP’s utility. In the following corollaries, we characterize when such curvature violations arise for two justice-based notions of fairness, Egalitarian and Rawlsian.

\paragraph{Symmetry and Alignment of Utilities.}
Two structural properties determine the geometry of utility-space trade-offs.
DM utilities are \emph{near-symmetric} when the benefits and costs of
positive decisions are comparable, i.e., $|C^{DM}_{11}| \approx|C^{DM}_{10}|$, and
\emph{asymmetric} when $|C^{DM}_{11}|>>|C^{DM}_{10}|$. %
Alignment between the DM and a DS group $s$ is defined as
$\alpha_s=\langle C^{(DM)},C^{(DS)}_s\rangle$, where $\langle \cdot \rangle$ is a inner product.
$\alpha_s>0$ (aligned) implies both stakeholders benefit from the same
decision-outcome events, while $\alpha_s<0$ (misaligned) indicates a conflict of
interests. These two properties jointly determine when deterministic PFs are
well-shaped and when stochastic policies become beneficial.

\begin{corollary}[Egalitarian Social Planner]
\label{cor:egal}
Assume the utilities $U_{DM}(\pi)$ and $U_{DS}(\pi)$ to be affine functions of  the joint decision-outcome probabilities,  and an Egalitarian SP, such that $
    \mathcal R_{\mathrm{Egal}}
    $ is a convex function and minimized. Then, stochastic policies strictly expand the attainable Pareto front if the DM utility is asymmetric ($|C^{DM}_{11}|>>|C^{DM}_{10}|$). Otherwise,  as the DM utility becomes nearly-symmetric, the stochastic and deterministic Pareto fronts coincide. %

\end{corollary}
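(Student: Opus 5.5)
The plan is to reduce Corollary~\ref{cor:egal} to Proposition~\ref{prop:hull}, so that all the work goes into checking when the deterministic front violates the convexity implied by $\mathcal R_{\mathrm{Egal}}$. With two groups and affine utilities, a group-specific deterministic policy $(\tau_1,\tau_2)$ gives
\[
U_{DM}=\sum_s w_s\bigl(C^{DM}_{11}\,TP_s(\tau_s)-C^{DM}_{10}\,FP_s(\tau_s)\bigr)+\text{const},
\qquad
U_{SP}=\bigl|U^1_{DS}(\tau_1)-U^2_{DS}(\tau_2)\bigr|,
\]
where $TP_s,FP_s$ are the joint probabilities $p(1,1\mid s),p(1,0\mid s)$ traced along the group's ROC-type curve. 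Each group utility is thus a function of a point on a concave ROC curve (concave because $h$ is calibrated). The deterministic front is the image of a one-parameter family, and stochastic policies give the convex hull of that image, by the mixture argument used for Proposition~\ref{prop:hull}. I will therefore parametrize the deterministic front by the DM utility level $u$, setting $g_{\det}(u)=\min\{\mathcal R_{\mathrm{Egal}} : U_{DM}=u\}$, and ask whether $g_{\det}$ is convex in $u$.

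\emph{Near-symmetric case.} If $|C^{DM}_{11}|\approx|C^{DM}_{10}|$, moving a threshold changes $U_{DM}$ at a rate proportional to $C^{DM}_{11}\,dTP-C^{DM}_{10}\,dFP$. This is monotone and of the same ROC-slope structure as $dU^s_{DS}$. Along a single ROC curve, the map from $U_{DM}$ to $U^s_{DS}$ is then essentially affine up to the concavity of the ROC curve. The gap $U^1_{DS}-U^2_{DS}$ is a difference of such functions and can be driven to zero continuously, so $g_{\det}(u)$ is the absolute value of a function that is convex in the relevant range. Hence $g_{\det}$ already coincides with its lower convex envelope. Proposition~\ref{prop:hull} then gives $PF_{stoch}=PF_{det}$.

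\emph{Asymmetric case.} If $|C^{DM}_{11}|\gg|C^{DM}_{10}|$, the optimal threshold $\tau$ is pushed toward an extreme, and $U_{DM}$ becomes nearly flat in the false-positive direction. Lowering $\mathcal R_{\mathrm{Egal}}$ then requires threshold moves that change $U_{DM}$ very little at first and then sharply. I will exhibit this concretely with a two-atom or discrete score distribution in which the deterministic outcomes form a non-convex chain: for example, three achievable points where the middle one lies strictly above the chord joining the other two. The mixture of the two endpoint policies, realizable in $\Pi_{\text{stoch}}$ with finite $\beta$ or via group-wise randomization, then strictly dominates the middle point. Proposition~\ref{prop:hull} yields strict expansion.

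The main obstacle is that ``$\gg$'' and ``$\approx$'' are informal. The asymmetric direction can hold only generically, not for every distribution. I would make it precise by showing that the second derivative of $g_{\det}$ changes sign once $C^{DM}_{11}/C^{DM}_{10}$ exceeds a threshold determined by the ROC slopes. I would also handle the kink of $|\cdot|$ at zero, which by itself preserves convexity, so it is not the source of the violation.
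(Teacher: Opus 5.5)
\paragraph{Near-symmetric case.} Your reduction to Proposition~\ref{prop:hull}, and the reformulation through $g_{\det}(u)$, match the paper, which also reduces everything to the curvature of the deterministic front via Lemma~\ref{lem:curvature}. The step that fails is your near-symmetric case. Take a shared threshold on a calibrated score, with group densities $f_s$, weights $w_s$ and common DS constants. Let $a(t)=(C^{DM}_{11}-C^{DM}_{01})t+(C^{DM}_{10}-C^{DM}_{00})(1-t)$ be the DM's marginal gain, and define $b(t)$ likewise from $C^{DS}$. Then
\[
\frac{dG}{dU_{DM}}=\frac{b(\tau)}{a(\tau)}\cdot\frac{f_1(\tau)-f_2(\tau)}{w_1f_1(\tau)+w_2f_2(\tau)},\qquad G=U^1_{DS}-U^2_{DS}.
\]
The first factor is a M\"obius function of $\tau$, so it is monotone on the trade-off branch. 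The second factor is controlled entirely by the two groups' score densities, and it is not a ROC-curvature quantity of either group: $dU_{DM}$ pools both groups, while $dU^s_{DS}$ sees only group $s$. Choose $f_1,f_2$ so that this ratio is large for high $\tau$ and drops on an intermediate band, on a stretch of the front where $G>0$. Then $dg_{\det}/du$ decreases there even when $|C^{DM}_{11}|=|C^{DM}_{10}|$ exactly, so $g_{\det}$ is non-convex and mixtures strictly help. Your inference (``a difference of such functions, hence the absolute value of a convex function'') therefore has no valid basis, since differences of concave-type functions need not be convex in $u$. More generally, no argument using only the DM constants can yield convexity. You need an explicit distributional hypothesis, e.g.\ that the displayed slope is non-decreasing along the front. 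With that hypothesis, Lemma~\ref{lem:curvature} and Proposition~\ref{prop:hull} give the ``coincide'' half at once.

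\paragraph{Asymmetric case and smaller points.} The asymmetric half has the mirror-image problem. Replacing the paper's marginal-rate heuristic with an explicit discrete example is a sound way to show that strict expansion \emph{can} occur. It does not show that expansion occurs whenever $|C^{DM}_{11}|\gg|C^{DM}_{10}|$. Your plan to find a ratio threshold past which $g_{\det}''$ changes sign cannot work for all distributions. If $f_1=f_2$ and the groups share DS constants, then $\mathcal R_{\mathrm{Egal}}\equiv 0$ for every shared policy, and both fronts are the single DM-optimal point however large the asymmetry. You rightly flag the claim as ``generic only,'' but then you are proving a weaker statement. The paper's proof does not close this either; it simply asserts that asymmetry makes the marginal rate non-monotone. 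The honest repair is to state the distributional condition under which the sign change occurs.

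Two smaller points. First, with group-specific $(\tau_1,\tau_2)$ the deterministic outcomes are the image of a two-parameter family, so your one-parameter curvature argument applies only to shared thresholds, as in the paper. Second, $\sigma(\beta(h-\gamma))$ is strictly increasing in $h$ (a logistic-randomized threshold). A two-point mixture of non-adjacent thresholds is constant in $h$ on a score interval, so realizability ``in $\Pi_{\text{stoch}}$ with finite $\beta$'' is not automatic. You can only invoke the mixture convention of Proposition~\ref{prop:hull}, as the paper does.
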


\begin{corollary}[Rawlsian Social Planner]
\label{cor:rawls}
Assume the utilities $U_{DM}(\pi)$ and $U_{DS}(\pi)$ to be affine functions of  the joint decision-outcome probabilities,  and an Rawlsian SP, such that $
    \mathcal R_{\mathrm{Rawls}}
    $ is a concave function and maximized. Then, stochastic policies strictly expand the attainable Pareto front if the DM utility is asymmetric ($|C^{DM}_{11}|>>|C^{DM}_{10}|$) and misaligned with at least one sensitive group ($\alpha_s<0$). Otherwise, the stochastic and deterministic Pareto fronts coincide.
     \TightenPar{1}
\end{corollary}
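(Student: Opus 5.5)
The plan is to reduce Corollary~\ref{cor:rawls} to Proposition~\ref{prop:hull}. Stochastic policies realize convex combinations of deterministic utility vectors, so the stochastic front is the upper concave envelope of $PF_{\det}$ in the $(U_{DM},U_{SP})$ plane. It therefore suffices to decide when the deterministic Rawlsian front $\{(U_{DM}(\pi_\tau), \min_s U_{DS}^s(\pi_\tau))\}$ fails to be concave. I will parameterize deterministic policies by thresholds $\tau$ (or $(\tau_s)_s$ in the group-specific case). As $\tau$ decreases, mass moves from the $D=0$ cells to the $D=1$ cells. This gives, for each group,
\[
\frac{dU_{DS}^s}{d\tau}\propto -\big(q_s(\tau)(C^{DS,s}_{11}-C^{DS,s}_{01}) + (1-q_s(\tau))(C^{DS,s}_{10}-C^{DS,s}_{00})\big),
\]
with $q_s(\tau)=P(Y=1\mid h=\tau,S=s)=\tau$ by calibration, and analogously for $U_{DM}$. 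Both curves $\tau\mapsto U_{DM}$ and $\tau\mapsto U_{DS}^s$ are thus integrals of affine-in-$\tau$ marginal gains weighted by the score density. The local slope $dU_{DS}^s/dU_{DM}$ is then a ratio of two affine functions of $\tau$, whose sign pattern is governed by the alignment $\alpha_s=\langle C^{(DM)},C^{(DS)}_s\rangle$.

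\textbf{Step 1 (coincidence case).} Suppose every group is aligned ($\alpha_s>0$), or the DM utility is near-symmetric. Near-symmetry puts the DM threshold near $1/2$, so the DM marginal gain $\tau C_{11}^{DM}-(1-\tau)C_{10}^{DM}$ changes sign where the DS marginal gains do. I will show that along the Pareto-relevant range of $\tau$ (between the DM optimum and the SP optimum) each $U_{DS}^s$ is a monotone, concave function of $U_{DM}$. The argument is that the slope ratio is monotone in $\tau$ when the numerator and denominator have the same sign structure. Since a minimum of concave functions is concave, $\min_s U_{DS}^s$ is concave in $U_{DM}$. Proposition~\ref{prop:hull} then gives equality of the fronts.

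\textbf{Step 2 (strict expansion).} Suppose the DM utility is asymmetric, $|C^{DM}_{11}|\gg|C^{DM}_{10}|$, and some group $s^\ast$ has $\alpha_{s^\ast}<0$. The DM marginal gain is then positive for almost all $\tau$, so $U_{DM}$ is monotone in $\tau$ over nearly the whole range. Meanwhile $U_{DS}^{s^\ast}$ moves against it, with a marginal gain of fixed opposite sign. The resulting $U_{DS}^{s^\ast}$-versus-$U_{DM}$ curve has slope ratio $\frac{a+b\tau}{c+d\tau}$ with $c+d\tau>0$ throughout. I will exhibit that this ratio is non-monotone in the wrong direction, so the curve is convex rather than concave on an interval. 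Alternatively, the kink where the minimizing group switches is a convex kink, because the min of two crossing curves is concave only if each piece is concave. Either way there are two deterministic points whose chord lies strictly above the deterministic front. The mixture of these two thresholds, realizable in $\Pi_{\text{stoch}}$ by the sigmoid family as a randomized threshold, then strictly dominates an intermediate deterministic policy. By Proposition~\ref{prop:hull}, this is a strict expansion.

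\textbf{Main obstacle.} The hard part is Step 2's curvature claim. Sign conditions on $C$ alone do not fix the second derivative: it also depends on the score density and on how the minimizing group changes. I would first try the argument through the switching point of the $\min$: misalignment forces $U_{DS}^{s^\ast}$ to decrease while another group's utility increases with $U_{DM}$, which produces a kink. If that approach is insufficient, I would settle for showing existence under a generic-density assumption, for instance a two-point or discrete score distribution, where $PF_{\det}$ is a finite point set and non-concavity is immediate. I would then interpret the statement's informal ``$\gg$'' as such a genericity regime.
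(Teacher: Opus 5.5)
Your reduction, where the stochastic front is the upper concave envelope of $PF_{\det}$ by Proposition~\ref{prop:hull} and Lemma~\ref{lem:curvature} then gives a monotonicity test on $dU_{SP}/dU_{DM}$, is exactly the paper's route. The paper's own curvature step is only qualitative (``increases in $U_{DM}$ may sharply reduce $\min_s U^s_{DS}$''), so the substance lies in your Steps~1--2, and the mechanisms you propose there do not work.

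First, for a shared threshold the slope is not a ratio of affine functions. Let $f_s$ be the score density in group $s$ and $f=\sum_s\Pr(S=s)f_s$ the pooled density. Then
\[
\frac{dU^s_{DS}}{dU_{DM}}=\frac{n_s(\tau)}{m(\tau)}\cdot\frac{f_s(\tau)}{f(\tau)},
\]
where $m,n_s$ are the affine marginal gains you wrote down. Nothing in the hypotheses controls $f_s/f$, so the monotonicity in Step~1 cannot follow from sign or symmetry conditions on the constants.

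Second, suppose the density factor cancels (one group, or identical score distributions across groups). Then $\frac{d}{d\tau}\frac{a+b\tau}{c+d\tau}=\frac{bc-ad}{(c+d\tau)^2}$ has constant sign. So the ratio you plan to show is ``non-monotone in the wrong direction'' in Step~2 is always monotone, and its direction is always the right one. To see this, take $w\geq 0$ and maximize $w_1U_{DM}+w_2U^s_{DS}$ over all randomized score-based rules. This is done pointwise by accepting iff $w_1m(t)+w_2n_s(t)>0$, an affine condition in $t$. Suppose that, for both stakeholders, accepting gains more relative to rejecting under $Y=1$ than under $Y=0$. Then every Pareto point of the convex set of attainable pairs $(U_{DM},U^s_{DS})$ is an upper threshold. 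Hence the threshold curve is concave on its trade-off range for \emph{any} constants, however asymmetric or misaligned.

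The remaining ingredients fail as well. A switch of the worst-off group cannot produce a violation: the pointwise minimum of two crossing curves has its slope jump \emph{down} at the crossing, so $\min_s$ only creates concave kinks. The mixture $\lambda\mathds{1}\{h\geq\tau_1\}+(1-\lambda)\mathds{1}\{h\geq\tau_2\}$ is a two-step function, not of the form $\sigma(\beta(h-\gamma))$, so it is not in $\Pi_{\text{stoch}}$. The paper's Lemma~\ref{lem:affine_curved} implicitly works with the convex hull of $\Pi_{\det}$ instead, so this gap is shared. Your fallback proves too much. With a discrete score distribution, $PF_{\det}$ is finite, and mixing two adjacent points always yields a new non-dominated point regardless of asymmetry or alignment. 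That contradicts the ``otherwise the fronts coincide'' clause rather than establishing the corollary.

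Together these show that, in your model, curvature violations come from the density ratio $f_s/f$ (or from atoms of the score distribution), not from asymmetry and the sign of $\alpha_s$. Groups with identical score distributions and DS constants give an asymmetric, misaligned instance with no strict expansion. Conversely, an oscillating $f_s/f$ can break concavity under near-symmetric, aligned constants. A sound argument therefore needs hypotheses on the group score distributions, or the claim must be weakened to the paper's ``may''.
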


\begin{highlightbox}
\textbf{Key Takeaways.} 
Stochastic policies provide the largest gains under Egalitarian fairness, where convex
objectives directly benefit from averaging across deterministic decisions.
Under Rawlsian fairness, improvements arise only in the presence of strong
utility asymmetries combined with misalignment between the DM and DS; otherwise, deterministic and stochastic trade-offs largely overlap. In either case, the foregoing theoretical results can assist practitioners in identifying when stochastic parameterizations may provide fairness improvements relative to deterministic alternatives, and therefore may merit consideration subject to moral acceptability when designing the decision-making policy to be deployed. 
\end{highlightbox}

\begin{wrapfigure}[19]{l}{0.5\linewidth}
  \centering
  \vspace{-1.0\baselineskip}
  \includegraphics[width=\linewidth]{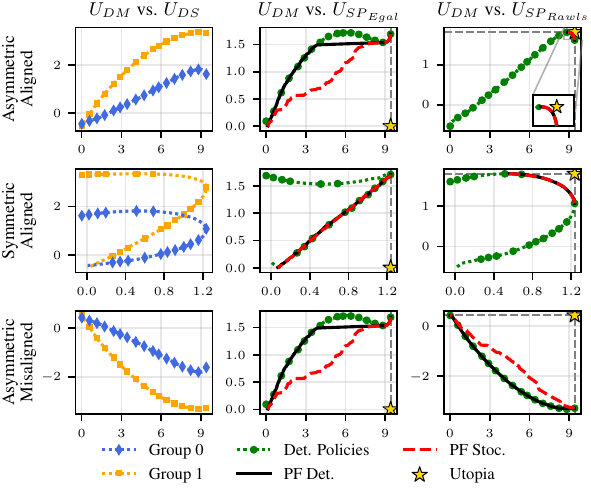}
  \caption{Conditions under which stochastic policies expand the deterministic PF and provide better trade-offs.
  }
  \label{fig:theorem}
  \vspace{-0.5\baselineskip}
\end{wrapfigure}

Figure~\ref{fig:theorem} provides a visual summary of Proposition~\ref{prop:hull} and Corollaries~\ref{cor:egal} and~\ref{cor:rawls}. Each row corresponds to a structural regime defined by the symmetry of DM utilities and their alignment with DS utilities, while columns report the resulting trade-offs under Egalitarian and Rawlsian fairness.
Consistent with the theory, stochastic policies expand the PF precisely when deterministic trade-offs fail to satisfy the curvature implied by the SP’s objective. Under Egalitarian fairness, this occurs whenever there is asymmetry (top and bottom rows), leading to non-convex {deterministic} fronts that are \mbox{convexified} by stochastic policies. Under Rawlsian fairness, deterministic and stochastic fronts largely overlap except in the presence of both strong asymmetry and misalignment (bottom row), where stochasticity smooths non-concave regions and yields additional Pareto-optimal compromises. In near-symmetric regimes (middle row), deterministic fronts are already well-shaped, and stochastic and deterministic trade-offs coincide. Full proofs are provided in Appendix~\ref{app:proofs}.

\section{Performance-fairness trade-offs in practice}
We instantiate our multi-stakeholder (MS) framework by evaluating two classes of decision policies,
$\{\Pi_{\text{det}},\Pi_{\text{stoch}}\}$, that share the same risk score $h$ and differ only in how they map scores to decisions.
$\Pi_{\text{det}}$ consists of deterministic thresholds (parameter $\tau$), while $\Pi_{\text{stoch}}$ consists of score-dependent randomized rules (parameters $(\beta,\gamma)$).
Each class can be instantiated as \emph{shared} (one parameter set for all groups) or \emph{group-specific} (parameters $\tau_s$ or $(\beta_s,\gamma_s)$ for each $s\in\mathcal S$).
Shared policies apply when group membership is unavailable or impermissible at decision time; group-specific policies apply when group membership is available and its use is legally and ethically permissible.

\subsection{Approximating the Pareto Fronts}

\begin{wrapfigure}{r}{0.4\linewidth}
  \centering
   \vspace{-1.0\baselineskip}
  \includegraphics[width=\linewidth]{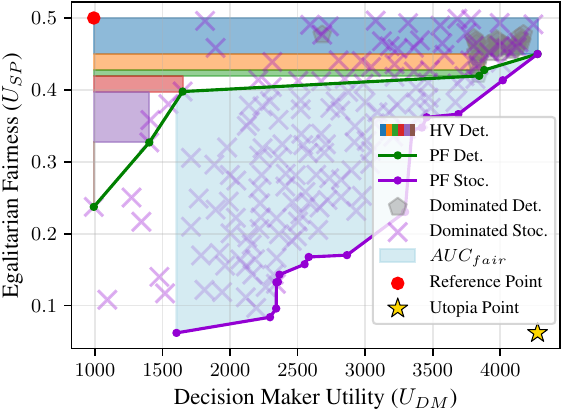}
  \caption{Visual representation for calculation of the HV using step stair method. Each rectangle represents the contribution of a particular Pareto policy in the estimation of HV.}
  \label{fig:PFandHV}
\end{wrapfigure}

Given a finite set of evaluated policies $\{\pi\}\subset\Pi$, we approximate the Pareto front (PF) in the chosen objective space $\mathbf f(\pi)$ by retaining the \emph{non-dominated} policies: those for which no other policy improves one objective without worsening the other (see Algorithm~\ref{alg:get_pareto} in Appendix~\ref{app:algorithms}).
Throughout, we trade off DM utility/performance ($U_{DM}$) against a fairness objective for decision subjects ($U_{SP}$), instantiated as (i) minimizing utility disparity (Egalitarian), or (ii) maximizing the minimum group utility (Rawlsian). In all Pareto plots, the \textbf{utopia point} corresponds to the (hypothetical) policy achieving maximal performance and zero or near zero unfairness, while the \textbf{nadir point} represents the worst observed non-dominated outcome. These points serve as anchors for comparing policy classes.\footnote{In our experiments, the utopia and nadir points were defined by the best and worst utility values on the Pareto front, respectively. Additional details are provided in Appendix~\ref{app:exp_details}.}

\subsection{Comparing Policy Classes in Utility space} \label{sec:metrics}

In a multi-stakeholder setting, comparing policy classes based on a single operating point is insufficient, as it implicitly fixes a particular performance-fairness trade-off. Instead, we compare entire PFs, capturing differences in \emph{coverage, shape, and fairness benefit}. {To this end, we adopt the standard MOO metric hypervolume~\cite{zitzler1999evolutionary,audet2021performance,riquelme2015performance} and related variants, which we adapt to our setting with clear stakeholder interpretations.}\TightenPar{1} 

\begin{itemize}
    \item \textbf{(Normalized) Hypervolume.}  
    For a $\text{PF}_\Pi \subset \mathbb R^2$, the hypervolume $\text{HV}(\Pi)$ is the area dominated by $\text{PF}_\Pi$ relative to reference point $r = (U_{DM}^{\text{r}}, U_{SP}^{\text{r}})$, chosen slightly worse than the nadir point:
    \[
        \text{HV}(\Pi) = \iint_{\text{dominated region}(\text{PF}_\Pi,r)} dU_{DM}\, dU_{SP}.
    \]
    We compute HV via the step-stair decomposition~\cite{zitzler1998multiobjective,while2011fast,shang2020survey} (Algorithm~\ref{alg:hypervolume} in Appendix~\ref{app:algorithms}).
    To compare across datasets, we report $nHV(\Pi)=\text{HV}(\Pi)/HV_{\max}$, where $HV_{max} = (U_{DM}^{\text{utopia}}- U_{DM}^{\text{r}})(U_{SP}^{\text{utopia}}- U_{SP}^{\text{r}})$, is the area of the rectangle spanned by the utopia point and $r$. A visual representation of HV calculation is given in Figure~\ref{fig:PFandHV}.
   
   \item \textbf{Fairness gain.}
   While HV captures global coverage, it does not isolate fairness improvements at comparable performance levels. To this end, we define a fairness gain metric that measures how much one policy class improves fairness relative to another at the same DM utility. {This metric is inspired by hypervolume-based difference measures used in MOO~\cite{bossek2018performance}, but adapted to compare fairness at fixed performance levels.}
    Let $U_{SP}^{\max}(\Pi,u)$ be the best fairness achievable by class $\Pi$ at $U_{DM}=u$ %
    .
    The point-wise gain of policy class $\Pi_b$ over $\Pi_a$ is given by:
    \[
        \delta_f(u)=U_{SP}^{\max}(\Pi_b,u)-U_{SP}^{\max}(\Pi_a,u),
    \]
    here $ \delta_f(u) >0$ indicates that $\Pi_b$ is fairer that $\Pi_a$ at $u$
    and we summarize gains over the shared performance range via the area between envelopes:
    \[
        \text{AUC}_{\text{fair}}=\int_{U_{DM}^{\min}}^{U_{DM}^{\max}} \delta_f(u)\, du.
    \]
\end{itemize}

\paragraph{Interpretation in the Multi-Stakeholder Framework}
$nHV$ measures how much \emph{negotiation space} between performance and fairness a policy class offers, while fairness gain isolates whether one class delivers systematically higher subject welfare at comparable DM utility.
Together, these metrics implement our \emph{first-see-then-design} protocol: stakeholders compare feasible trade-offs before selecting an operating point i.e., a deployable decision policy, making normative choices \emph{explicit and transparent} rather than implicit in a single trained model supporting informed, participatory decision-making among stakeholders.

\section{Experiments}\label{sec:experiments}

\begin{wrapfigure}{r}{0.34\linewidth}
  \centering 
  \vspace{-1.0\baselineskip}
  \includegraphics[width=\linewidth]{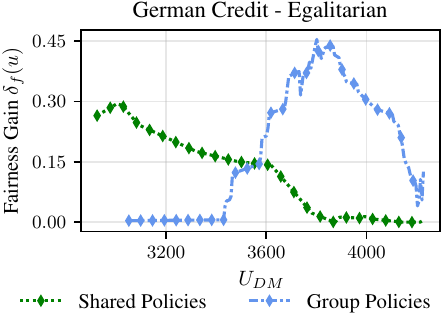}
  \caption{Egalitarian fairness Gain across utility spectrum, for both shared and group-specific for German Credit dataset.}
  \label{fig:auc_gainperutility}
\end{wrapfigure}

\paragraph{Datasets.}
We evaluate on five datasets: two synthetic (credit, hiring) and three real, German Credit~\cite{hofmann1994statlog}, Home Credit~\cite{matthyspredicting,homecredit}, and MIMIC-III Sepsis~\cite{Hou2020} spanning finance, hiring, and healthcare and differing in group imbalance. For each dataset, we specify domain-motivated DM and DS utilities via interpretable cost-benefit constants over decision outcomes, and the SP measures fairness by comparing DS utilities across groups under a chosen justice criterion. Full dataset descriptions, preprocessing, and utility and policy specifications are provided in Appendix~\ref{app:datasets}.

The utility constants are not intended as normative prescriptions; they simply parameterize stakeholder preferences. Our theory does not depend on their specific values but only on: (i) utilities being affine in decision-outcome probabilities and (ii) the SP objective having known curvature. As a result, qualitative performance-fairness trade-offs are driven by structural properties (e.g., symmetry and stakeholder alignment), not by the precise magnitudes of the constants.

\paragraph{Evaluation.}
Each dataset is split into training and test sets. We first train a neural network classifier $h(X,S)$ on the training data to estimate 
$\mathbb{P}(Y=1\mid X,S)$, then freeze it and compute scores for both splits. Using the training scores, we instantiate the policy classes from Section~\ref{sec:methodology} and select Pareto-optimal policies in the utility objective space. We then evaluate these fixed policies on the test set using the performance-fairness metrics in Section~\ref{sec:metrics}. Since training-Pareto optimality need not generalize, we report empirical PFs and metrics computed from test-set utilities.\footnote{The complete implementation is available at \href{https://github.com/kavyagupta/First-See-Then-Design.git}{the official repository}.} \TightenPar{1}
 
\begin{figure*}
    \centering    \includegraphics[width=1\linewidth]{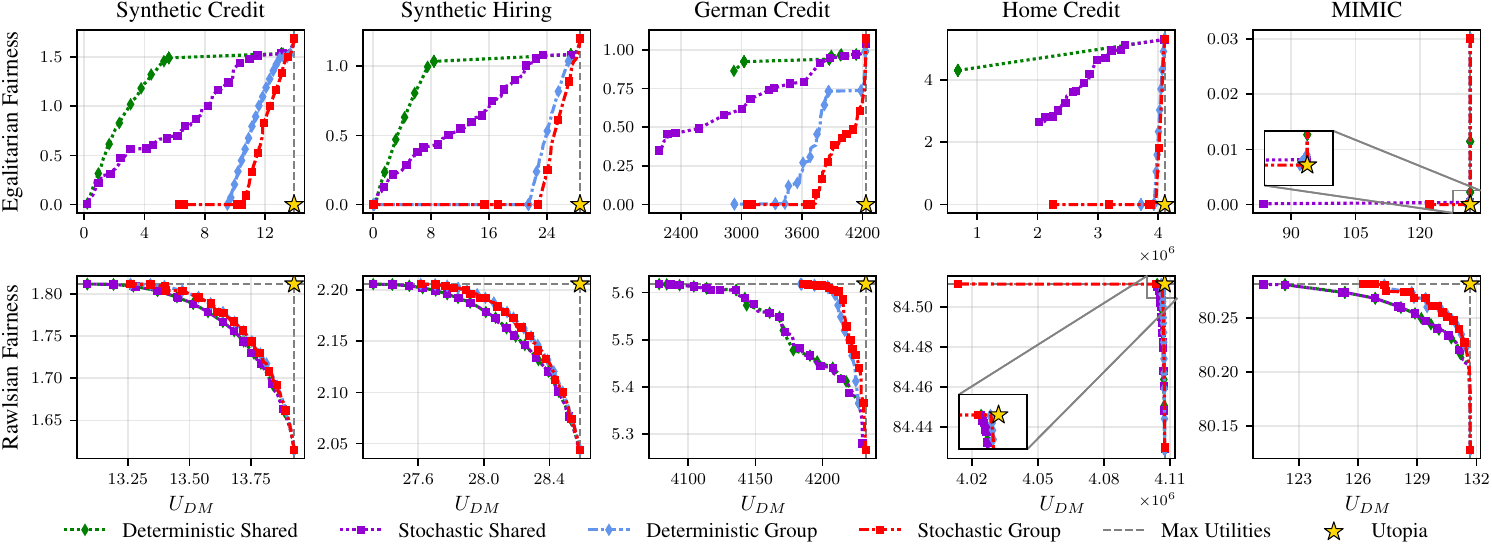}
    \caption{\textbf{PFs comparing deterministic and stochastic policies.} Stochastic policies consistently expand the PF, outperforming their deterministic counterparts in both shared  and group-specific settings. Notably, stochastic group-specific policies trace broader and fairer regions, approaching the utopia point, where both stakeholder utilities are optimal.\TightenPar{1}}
    \label{fig:paretofronts}
\end{figure*}

\begin{table*}[t]
\centering
\scriptsize
\setlength{\tabcolsep}{3.3pt} %
\begin{tabular}{ll|cc|cc|c|c|cc|cc|c|c}
\toprule
\multirow{3}{*}{\textbf{Dataset}} & \multirow{3}{*}{\textbf{Setting}} 
& \multicolumn{6}{c|}{\textbf{Egalitarian}} 
& \multicolumn{6}{c}{\textbf{Rawlsian}} \\
\cmidrule(lr){3-8}\cmidrule(lr){9-14}
& 
& \multicolumn{2}{c|}{${nHV}$~($\uparrow$)} 
& \multicolumn{2}{c|}{${nHV_{test}}$~($\uparrow$)} 
& \multicolumn{1}{c|}{$AUC_{fair}$} 
& \textbf{$AUC_{fair}^{test}$}
& \multicolumn{2}{c|}{${nHV}$~($\uparrow$)} 
& \multicolumn{2}{c|}{${nHV_{test}}$~($\uparrow$)} 
& \multicolumn{1}{c|}{$AUC_{fair}$} 
& \textbf{$AUC_{fair}^{test}$} \\
&
& \textbf{$PF_{det}$} & \textbf{$PF_{stoch}$} 
& \textbf{$PF_{det}$} & \textbf{$PF_{stoch}$} 
& \multicolumn{1}{c|}{} & 
& \textbf{$PF_{det}$} & \textbf{$PF_{stoch}$} 
& \textbf{$PF_{det}$} & \textbf{$PF_{stoch}$} 
& \multicolumn{1}{c|}{} & \\
\midrule
\multirow{2}{*}{Synthetic Credit} & shared & 0.26& 0.47& 0.26& 0.48 & 0.36& 0.38 & 0.82& 0.83& 0.84& 0.84 & 0.00& 0.00\\
     & group & 0.82& 0.87& 0.81& 0.87 & 0.27& 0.29 & 0.86& 0.86& 0.86& 0.87 & 0.00& 0.00\\
    \midrule
    \multirow{2}{*}{Synthetic Hiring} & shared & 0.24& 0.47& 0.25& 0.47 & 0.26& 0.24 & 0.81& 0.81& 0.82& 0.82 & 0.00& 0.00\\
     & group & 0.87& 0.90& 0.88& 0.91 & 0.04& 0.16 & 0.85& 0.86& 0.85& 0.86 & 0.00& 0.00\\
    \midrule
    \multirow{2}{*}{German} & shared & 0.18& 0.35& 0.29& 0.31 & 0.12& 0.00 & 0.76& 0.77& 0.80& 0.80 & 0.00& 0.00\\
     & group & 0.82& 0.91& 0.63& 0.96 & 0.18& 0.12 & 0.93& 0.95& 0.98& 0.97 & 0.01& 0.00\\
    \midrule
    \multirow{2}{*}{HomeCredit} & shared & 0.02& 0.31& 0.01& 0.30 & 0.68& 0.40 & 0.99& 0.99& 0.85& 0.85 & 0.00& 0.00\\
     & group & 0.97& 0.98& 0.94& 0.98 & 0.23& 0.62 & 1.00& 1.00& 0.94& 0.95 & 0.00& 0.00\\
    \midrule
    \multirow{2}{*}{MIMIC} & shared & 0.97& 1.00& 0.16& 0.16 & 0.00& 0.00 & 0.89& 0.89& 0.89& 0.91 & 0.00& 0.00\\
     & group & 1.00& 1.00& 0.98& 0.99 & 0.00& 0.04 & 0.95& 0.95& 0.95& 0.95 & 0.00& 0.00\\
\bottomrule
\end{tabular}
\caption{{nHV and  $AUC_\text{fair}$ for different policy classes in utility space.} Higher nHV values indicate broader Pareto-optimal trade-off regions. Stochastic policies consistently show improvement in nHV values. Similarly high AUC values indicate better fairness gain of stochastic policies over deterministic policies. Unnormalized HV values can be found in Table~\ref{tab:app_hv_auc_results}.}
\label{tab:hv_auc_results}
\vspace{-0.7cm}
\end{table*}

\paragraph{Comparison of policies}
Figure~\ref{fig:paretofronts} shows train time utility-space Pareto fronts under Egalitarian and Rawlsian justice. Consistent with our theory in Section~\ref{sec:theory}, stochastic policies often strictly expand the deterministic front under Egalitarian fairness, whereas under Rawlsian fairness the two fronts largely overlap, with stochasticity yielding modest but consistent gains in some regimes.

Across all datasets (Figure~\ref{fig:paretofronts}), \textbf{stochastic policies improve fairness without substantial loss in DM utility}. This pattern is corroborated quantitatively in Table~\ref{tab:hv_auc_results}: stochastic policies yield higher normalized HV, indicating a broader and more favorable trade-offs. A larger HV reflects a more flexible front, enabling smoother transitions between performance and fairness an important property for real-world deployment.
Stochastic policies also attain positive integrated fairness gains ($AUC_{fair}$) across all datasets, confirming that stochasticity improves fairness not only at isolated operating points but across the utility spectrum (Figure~\ref{fig:auc_gainperutility}).

Group-specific policies outperform shared ones across both deterministic and stochastic classes. By tailoring decision parameters to group-conditional score distributions and utilities, they attain higher HV and larger fairness gains. Shared policies are simpler and align with fairness-through-unawareness, but are less expressive and may still produce disparate impact via proxy features; group-specific policies can reduce such disparities, at the cost of legal and ethical concerns about differential treatment. Our results quantify this tension between deployability and the benefits of group-aware optimization. Notably in Figure~\ref{fig:auc_gainperutility}, shared stochastic policies consistently beat deterministic baselines over a broad utility range, with the largest gains in intermediate regimes where performance-fairness conflicts are sharpest. Group-specific stochastic policies deliver the strongest fairness improvements in high-utility regions, which are particularly relevant in high-stakes settings where institutional objectives must be maintained. Our empirical results demonstrate that group-specific policies can achieve substantial improvements, highlighting that the magnitude of these gains may warrant a careful, principled discussion on the collection and use of sensitive attributes for policy optimization.  \TightenPar{1}
\begin{wrapfigure}{l}{0.365\linewidth}
  \centering
  \includegraphics[width=0.81\linewidth]{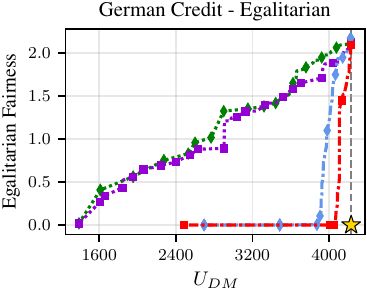}
  \medskip
  \centering
  \vspace{5pt}
  \begin{tabular}{c|cc|c}
    \textbf{Setting} & \textbf{$nHV_{det}$} & \textbf{$nHV_{stoc}$} & \textbf{$AUC_{fair}$} \\
    \midrule
    shared & 0.47 & 0.52 & 0.08 \\
    group  & 0.92 & 0.97 & 0.19 \\
  \end{tabular}

  \caption{PFs, nHV and $AUC_\text{fair}$ when each DS group has different utility constants.}
\label{fig:diff_constants}
\end{wrapfigure} 

For completeness, we report German Credit results with \emph{heterogeneous} DS utilities (i.e., female applicants have a higher benefit when a loan is approved and repaid). Figure~\ref{fig:diff_constants} shows the same qualitative patterns as the main experiments, confirming that our empirical findings are \textbf{robust to heterogeneity in group-level utility specifications}. 
{Finally, in the synthetic credit setting, we perform an ablation on the utility constants to study how hypervolume gains depend on utility asymmetry and alignment (Section~\ref{sec:theory}). As illustrated in Figure~\ref{fig:HVgainVSratio}, increasing asymmetry ($|C_{11}^{DM}| / |C_{10}^{DM}| > 1$) leads to larger gains when switching to stochastic policies, with the effect being particularly pronounced under misalignment. In contrast, under alignment, gains remain negligible for the Rawlsian objective across most ratios. Notably, Egalitarian gains are invariant to alignment, and no improvements are observed when $|C_{11}^{DM}| \leq |C_{10}^{DM}|$.
These observations are consistent with our theoretical results: asymmetry alone is sufficient to induce gains under the Egalitarian objective (Corollary~\ref{cor:egal}), whereas Rawlsian improvements additionally require misalignment (Corollary~\ref{cor:rawls}). Full results and additional analyses are provided in Appendix~\ref{app:stochasticity}.}

Additional results (including test-set PFs, per-utility fairness gains and unnormalized HVs) appear in Appendix~\ref{app:extra_results}. Overall, stochastic policies especially shared variants provide a robust and practical way to improve performance-fairness trade-offs without materially compromising the DM objectives. \TightenPar{1}

\begin{figure*}
    \centering
    \includegraphics[width=1\linewidth]{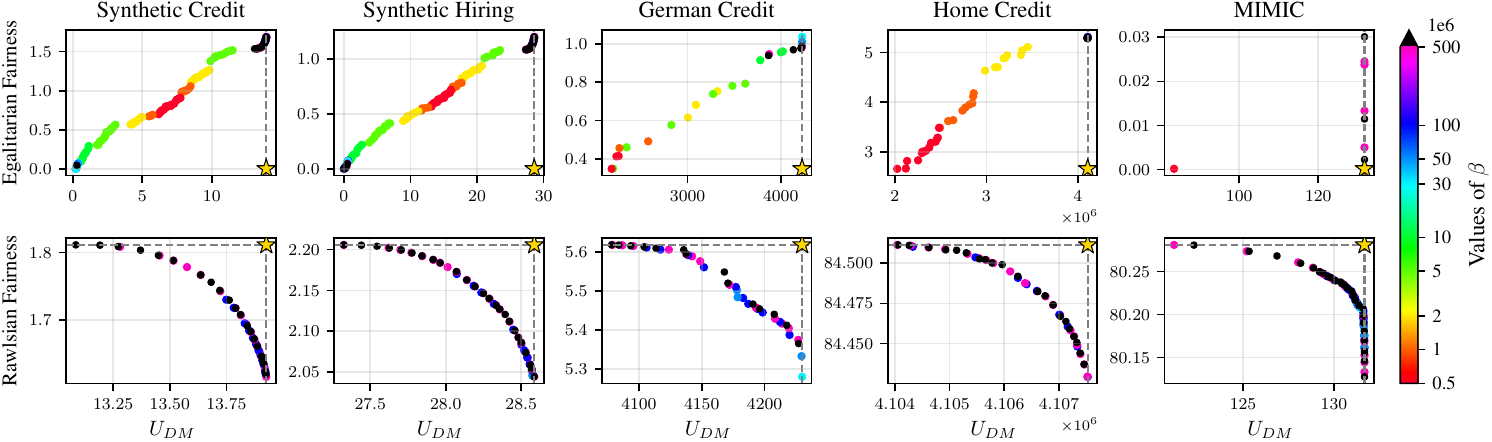}
    \caption{\textbf{Effect of stochasticity.} Shared stochastic policies (colored by $\beta$) populate interior regions of the utility-space PFs, enabling smoother and more flexible trade-offs than deterministic policies.}
    \label{fig:beta_heatmap_main}
    \vspace{-0.3cm}
\end{figure*}

\paragraph{Effect of stochasticity.}
Figure~\ref{fig:beta_heatmap_main} shows how the stochasticity parameter 
$\beta$ shapes the utility-space PFs for shared stochastic policies. Each point is a Pareto-optimal policy, color-coded by 
$\beta$: small 
$\beta$ (warmer colors) yields highly stochastic decisions, while large 
$\beta$ (cooler colors) approaches deterministic thresholding. Across datasets, intermediate 
$\beta$ values populate the interior of the PF, filling gaps between deterministic solutions. This matches the geometric interpretation in Section~\ref{sec:theory}: by interpolating between deterministic outcomes, stochastic policies convexify (and, under Rawlsian objectives, effectively concavify along the relevant direction) the attainable utility region, \textbf{enabling gradual fairness improvements without abrupt losses in DM utility}. %
\begin{wrapfigure}{r}{0.45\linewidth}
    \centering
\includegraphics[width=\linewidth]{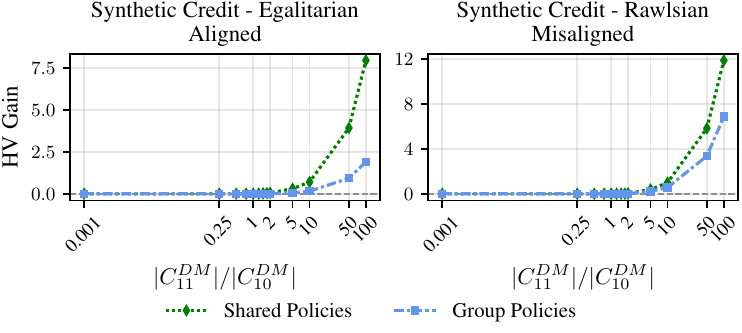}
    \caption{Hypervolume gain as a function of the utility asymmetry ratio $|C_{11}^{DM}| / |C_{10}^{DM}|$ for both alignment and misalignment cases on the synthetic credit dataset.}

    \label{fig:HVgainVSratio}
\end{wrapfigure}
Figure~\ref{fig:pY_vs_Pz} provides a policy-level view for group-specific stochastic policies on German Credit. It plots the loan-granting probability 
$\pi_d$ as a function of predicted repayment probability $P_y$, for each group. \textbf{Stochasticity is applied selectively}: one group follows an almost deterministic rule (large 
$\beta$), while the other uses a softer boundary (moderate 
$\beta$). This asymmetric use of stochasticity allows the policy to target fairness constraints e.g., meeting an Egalitarian fairness level without unnecessarily randomizing decisions for all groups. More broadly, stochasticity concentrates where it is most useful i.e. near decision boundaries. %
Appendix~\ref{app:stochasticity} further clarifies these dynamics. When plotting \emph{all} evaluated shared stochastic policies (not only PF points; Figure~\ref{fig:app_betaheatmaps}), policies cluster by $\beta$, and these clusters shrink as $\beta$ decreases consistent with increased smoothing making the exact threshold less consequential.

\begin{highlightbox}
\textbf{Key Takeaways.} Stochastic policies strictly generalize deterministic ones, so our sweep still recovers optimal near-deterministic solutions when added randomness is unhelpful. Allowing group-specific parameters further expands the attainable trade-off set. Finally, our multi-stakeholder framework places DM, DS, and SP objectives in a common utility space, enabling transparent, collaborative selection of decision policies that reflect stakeholder priorities. \TightenPar{1}
\end{highlightbox}

\vspace{10pt}
\section{Discussion}
\paragraph{Fairness as a negotiated trade-off, not a constraint:}
A central claim of this work is that fairness in algorithmic decision-making should be understood not as a rigid constraint on performance, but as a negotiated trade-off grounded in realized welfare~\cite{heidari2019moral}. Classical fairness notions operate in the predictive space and rely on error-based proxies that are often misaligned with social outcomes~\cite{corbett2023measure,scantamburlo2025prediction}. We show that enforcing such constraints can yield policies that are suboptimal when evaluated in terms of stakeholder utilities~\cite{hertweck2024group}. By shifting analysis to the utility space and adopting a \textbf{post-hoc multi-objective optimization approach}, we make performance-fairness trade-offs explicit and interpretable. Each Pareto-optimal policy represents a legitimate normative choice balancing DM utility, DS welfare, and social fairness, supporting \emph{ex ante deliberation} rather than prescribing a single “fair” solution.

\paragraph{Blind vs. Non-blind Policies; Ethical and Legal Implications:}
Our results show that policy parameterization shapes attainable trade-offs. When permissible, group-specific policies can improve performance-fairness outcomes by accounting for distributional differences and heterogeneous impacts~\cite{hertweck2024group}, but using sensitive attributes raises legal and ethical concerns~\cite{barocas2016big}. Notably, \textbf{shared (group-blind) stochastic policies can still yield meaningful fairness gains}~\cite{awasthi2020equalized}, offering a pragmatic middle ground between compliance and welfare improvement. They also align with an individual-fairness view by mapping similar individuals to similar decision probabilities and avoiding sharp threshold discontinuities~\cite{dwork2012fairness}. Blind and non-blind policies should thus lie on a design spectrum rather than as a binary choice.

\paragraph{Expanding the Fairness Landscape with Stochastic Policies:}
A key technical contribution is demonstrating how \textbf{{stochastic} decision policies expand the set of attainable performance-fairness trade-offs}. As a principled modeling choice rather than ad-hoc randomization~\cite{agarwal2018reductions}, stochasticity smooths threshold effects and mitigates welfare discontinuities near decision boundaries. Stochastic policies generalize deterministic ones, preserving their trade-offs while filling gaps in the utility-space PF under misaligned objectives, so their benefits are setting-dependent. {Importantly, our goal is not to advocate for unconditional deployment of stochastic policies, but to provide tools for identifying \emph{when} and \emph{by how much} stochasticity can improve fairness relative to deterministic policies. This allows stakeholders to assess whether such improvements justify the use of randomization in a given context. In practice, stochasticity is already used in institutional settings (e.g., lotteries or tie-breaking).} At the same time, stochasticity raises procedural and governance considerations, requiring clear communication, auditability, and justification for when randomization is applied. These concerns motivate confidence-aware or semi-stochastic variants (Figure~\ref{fig:pY_vs_Pz}) that restrict randomization to high-uncertainty regions.\TightenPar{1}

\begin{wrapfigure}[15]{l}{0.4\linewidth}
  \centering
  \vspace{-5pt}
\includegraphics[width=0.9\linewidth]{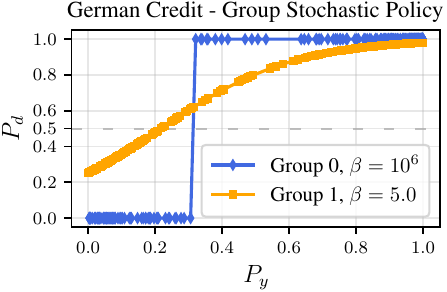}
  \caption{Group-specific stochastic policies apply different levels of randomness across demographic groups.  
  The policy is selected at
  $U_{DM} \approx 4000$, $U_{{SP}_{Egal}} \approx 0.5$.}
  \label{fig:pY_vs_Pz}
\end{wrapfigure}

\paragraph{Practicality and real-world deployment:}
Our framework is designed for practical deployment. The proposed stochastic policies require only lightweight reparameterizations of standard decision rules and introduce few, interpretable parameters that can be tuned via utility-space Pareto fronts. This enables fairness improvements without sacrificing scalability or tractability. For regulators and auditors, the framework offers \textbf{transparency grounded in realized welfare} rather than predictive parity, supporting clearer accountability, auditability and oversight.

\paragraph{Limitations and Future Directions:}
The framework relies on specifying stakeholder utilities, which may be contested or imperfectly estimated, underscoring the need for participatory design~\cite{vineis2025beyond} and sensitivity analysis~\cite{selbst2019fairness}. {Our current formulation adopts group-based utilities; extending this to alternative fairness notions, such as distance-based individual fairness, is a promising direction for future work. Our theoretical results are not tied to group-based fairness, but require only a well-defined SP objective with appropriate curvature properties; once specified, our geometric characterization continues to apply.} Our analysis assumes reasonably calibrated outcome estimates and focuses on static binary decisions. {Extending the framework to multi-class decisions or outcomes is conceptually straightforward, provided both the score function and utilities are appropriately defined (e.g., via temperature-scaled softmax policies~\cite{hinton2015distilling}), though extending the theoretical analysis is left to future work.}  Replacing our current policy search with dedicated multi-objective optimization could further improve efficiency in exploring the utility-space Pareto front. %
Finally, while stochastic and group-specific policies can improve trade-offs, deployment may be constrained by legal or institutional factors beyond the scope of this work, highlighting the need for \textbf{integrated technical, legal, and governance approaches}.

\begin{acks} %
{ This work has been supported by the project “Society-Aware
Machine Learning: The paradigm shift demanded by society to trust machine learning,” funded
by the European Union and led by IV (ERC-2021-STG, SAML, 101040177);  and the the Deutsche Forschungsgemeinschaft (DFG, German Research Foundation) through GRK 2853/1 “Neuroexplicit Models of Language, Vision, and Action” (Project No. 471607914). Views and opinions expressed are, however, those of the author(s) only and do not necessarily reflect
those of the aforementioned funding agencies. Neither of the aforementioned parties can be held
responsible for them.}
\end{acks}

\section*{Generative AI usage statement }

This research \textbf{did not} use LLMs/Generative AI in any part of the core methodology (problem formulation, algorithm design, implementation, experiments, or evaluation). If LLM tools were used at all, it was only for non-substantive writing or editing, grammar, word choice and \textbf{did not} influence results, claims, or conclusions presented in the work.

\bibliographystyle{ACM-Reference-Format}
\bibliography{facct26}

\clearpage
\appendix

\section{Related Work} \label{app:related_work}

\paragraph{Multi-Stakeholder Fairness and Multi-objective Optimization}
Traditional fairness approaches often treat fairness as a constraint on a single loss function typically optimized from the decision-maker’s perspective. In reality, algorithmic decisions affect multiple stakeholders with conflicting goals, such as institutional utility, individual well-being, and regulatory oversight. Recent work re-frames fairness as a separate objective, modeling each stakeholder via their own utility function~\cite{heidari2019moral, liu2022accuracy, baumann2022distributive}.
MOO provides a principled framework for balancing these competing goals. Early work introduced fairness metrics like demographic parity and equalized odds as objectives within a Pareto optimization setup~\cite{zafar2017fairness, donini2018empirical}. Later extensions enabled more expressive models to approximate full Pareto fronts, allowing stakeholders to select trade-offs post-training~\cite{martinez2020minimax}.
However, performance-fairness trade-offs are highly context dependent. Studies have shown that enforcing fairness constraints can reduce accuracy or institutional utility~\cite{corbett2017algorithmic, chouldechova2017fair}. As~\citet{kearns2019ethical} argue, resolving these trade-offs requires making the underlying values explicit clarifying what fairness means and whose interests should be prioritized. Our work builds on this foundation by modeling stakeholder utilities directly and uncover Pareto-efficient solutions, especially in settings where deterministic policies are too rigid. 

\paragraph{Utility-Based Views of Fairness.}
An emerging perspective frames fairness in ML systems as a problem of distributive justice: how should the benefits and harms of algorithmic decisions be allocated? This utility-based view emphasizes that individuals experience different levels of benefit or harm from the same decision, requiring fairness to be evaluated in terms of real-world impact on well-being~\cite{sen1980equality, lundgard2020measuring}.
\citet{heidari2018fairness} introduced welfare-based definitions that integrate downstream consequences into learning objectives. Building on this, \citet{heidari2019moral} argue that fairness must reflect morally relevant claims to utility, inspired by the theory of equality of opportunity. In this view, fairness metrics encode implicit normative judgments about which inequalities are justified.
\citet{hertweck2023justice, hertweck2021moral} extend this line by proposing a framework linking fairness modeling choices to principles like Rawlsian max-min and prioritarianism. They frame algorithms as institutional mechanisms subject to moral scrutiny. It has also been empirically shown that applying fairness constraints without considering utility can harm marginalized groups~\cite{hu2020fair}.
Most recently \citet{casacuberta2023augmenting} propose a unified framework that evaluates fairness through both equity and overall welfare. Together, these works motivate a shift from parity-based criteria to fairness definitions grounded in utility, justice, and stakeholder well-being.

\paragraph{Stochastic Policies and Fairness in Expectation.}
Deterministic decision rules are often insufficient to satisfy fairness constraints, especially when fairness is defined over group-level distributions or involves risk trade-offs across heterogeneous populations. Prior work has shown the necessity of stochasticity in various settings: \citet{wang2021stochastic} demonstrate that randomized policies are essential for exposure fairness in online allocation; \cite{wei2021decision, rateike2022don, kilbertus2020fair} show that deterministic rules in selective label settings reinforce historical bias and limit exploration; and~\cite{celis2019classification} propose randomized classifiers as a principled solution when group fairness constraints are otherwise infeasible.
In bandit learning it has been argued that stochastic exploration is necessary to satisfy fairness objectives like demographic or calibrated parity~\cite{joseph2016fairness, liu2017calibrated}. At the individual level~\citet{dwork2012fairness} formalize fairness as requiring similar individuals to receive similar distributions over outcomes again necessitating randomization.
Building on these insights, our work extends the case for stochasticity to static decision environments. We show that performance-fairness trade-offs often cannot be achieved through deterministic policies alone. Stochastic decision-making becomes essential when stakeholder utilities conflict or group-specific cost asymmetries exist, allowing fairness constraints to be satisfied in expectation and enabling access to broader Pareto-optimal regions.
\section{Proofs}
\label{app:proofs}

We begin by formalizing how stochasticity affects performance and fairness
utilities, and how this translates into the geometry of attainable trade-offs.

\subsection*{Preliminaries}

As a reminder, for any policy $\pi$, the DM utility and DS utilities are given by:
\begin{align}
U_{DM}(\pi)
&= \sum_{d,y} C^{DM}_{dy}\,\mathbb{P}_{\pi}(D=d,Y=y), \label{eq:udm_affine}\\
U_{DS}^s(\pi)
&= \sum_{d,y} C^{DS,s}_{dy}\,\mathbb{P}_{\pi}(D=d,Y=y\mid S=s). \label{eq:uds_affine}
\end{align}
Hence, both $U_{DM}$ and $U_{DS}^s$ are affine functions of the
decision-outcome distribution induced policy by $\pi$.
The SP evaluates fairness as:
$
U_{SP}(\pi) = \mathcal R\!\left(U_{DS}(\pi)\right),
$
where $\mathcal R$ is either convex and minimized (Egalitarian fairness) or concave and
maximized (Rawlsian fairness).

\subsection*{Key Lemmas}
\begin{lemma}[Affine Performance, Curved Fairness]
\label{lem:affine_curved}
For any two deterministic policies $\pi_1,\pi_2$ and any $\lambda\in[0,1]$, let
$\pi_\lambda=\lambda\pi_1+(1-\lambda)\pi_2$ be their stochastic policy that randomizes between them.
Then:
\[
U_{DM}(\pi_\lambda)
=
\lambda U_{DM}(\pi_1)+(1-\lambda)U_{DM}(\pi_2),
\]
while

\[
U_{SP}(\pi_\lambda)
\begin{cases}
\le \lambda U_{SP}(\pi_1)+(1-\lambda)U_{SP}(\pi_2),
& \begin{aligned}[t]
  &\text{if $\mathcal R$ is convex
  and minimized,}
  \end{aligned}
\\[6pt]
\ge \lambda U_{SP}(\pi_1)+(1-\lambda)U_{SP}(\pi_2),
& \begin{aligned}[t]
  &\text{if $\mathcal R$ is concave and maximized.}
  \end{aligned}
\end{cases}
\]

\end{lemma}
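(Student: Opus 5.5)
The plan is to first show that randomization is linear at the level of the induced decision-outcome distributions, and then push this linearity through the affine utilities and the curved aggregator $\mathcal R$. Interpret $\pi_\lambda$ as the policy that, independently of $(X,S,Y)$, flips a $\lambda$-coin and then applies $\pi_1$ or $\pi_2$. Equivalently, $\pi_\lambda(D=1\mid X,S)=\lambda\pi_1(D=1\mid X,S)+(1-\lambda)\pi_2(D=1\mid X,S)$. For every group $s$ and every pair $(d,y)$, I would write
\[
p_{\pi_\lambda}(d,y\mid s)=\mathbb{E}_{X\mid S=s}\big[\pi_\lambda(D=d\mid X,s)\,\mathbb{P}(Y=y\mid X,s)\big].
\]
This uses the fact that decisions precede outcomes, so $D$ and $Y$ are conditionally independent given $(X,S)$. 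Linearity of expectation then gives $p_{\pi_\lambda}(d,y\mid s)=\lambda p_{\pi_1}(d,y\mid s)+(1-\lambda)p_{\pi_2}(d,y\mid s)$. Averaging over $S$ gives the same identity for the unconditional $p_{\pi_\lambda}(d,y)$.

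The DM part then follows immediately. Plugging the mixture identity into \eqref{eq:udm_affine} and using $\lambda+(1-\lambda)=1$ handles any constant offset, so an affine map commutes with the convex combination. This yields $U_{DM}(\pi_\lambda)=\lambda U_{DM}(\pi_1)+(1-\lambda)U_{DM}(\pi_2)$. By the same computation applied to \eqref{eq:uds_affine}, each group utility satisfies
\[
U^s_{DS}(\pi_\lambda)=\lambda U^s_{DS}(\pi_1)+(1-\lambda)U^s_{DS}(\pi_2).
\]
Hence the vector identity $\mathbf U_{DS}(\pi_\lambda)=\lambda\mathbf U_{DS}(\pi_1)+(1-\lambda)\mathbf U_{DS}(\pi_2)$ holds in $\mathbb R^{|\mathcal S|}$.

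For the SP part, I would apply the definition of convexity (respectively concavity) of $\mathcal R$ directly to this vector identity. If $\mathcal R$ is convex,
\[
U_{SP}(\pi_\lambda)=\mathcal R\big(\lambda\mathbf U_{DS}(\pi_1)+(1-\lambda)\mathbf U_{DS}(\pi_2)\big)\le\lambda\mathcal R(\mathbf U_{DS}(\pi_1))+(1-\lambda)\mathcal R(\mathbf U_{DS}(\pi_2)).
\]
The concave case reverses the inequality. To confirm the hypotheses cover the paper's examples, I would note briefly that $\mathcal R_{\mathrm{Egal}}$ is a sum of absolute values of linear forms, hence convex, and $\mathcal R_{\mathrm{Rawls}}$ is a minimum of linear forms, hence concave.

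The main subtlety is the first step rather than the final inequality. One must make sure the mixture is taken at the level of decision probabilities with randomization independent of $Y$ given $(X,S)$, so that the joint $p_\pi(d,y\mid s)$ is genuinely linear in $\pi$. It also matters that the group-conditional distributions are mixed with the same weight $\lambda$ in every group. That holds because the coin does not depend on $S$; with group-specific mixing weights the vector identity would fail. I would state this independence explicitly as the meaning of ``randomizes between them.''
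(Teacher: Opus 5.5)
Your proposal is correct and follows essentially the same route as the paper. Both realize $\pi_\lambda$ by a $\lambda$-coin independent of $(X,S,Y)$, derive the mixture identity for the (group-conditional) decision--outcome distributions, push it through the affine utilities to obtain $\mathbf U_{DS}(\pi_\lambda)=\lambda\mathbf U_{DS}(\pi_1)+(1-\lambda)\mathbf U_{DS}(\pi_2)$, and then apply convexity (resp.\ concavity) of $\mathcal R$. The only minor difference is that you justify the mixture identity via conditional independence of $D$ and $Y$ given $(X,S)$, whereas the paper simply conditions on the coin $Z$. That version does not need this assumption, and would remain valid even if decisions affected outcomes.
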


\begin{proof}\renewcommand{\qedsymbol}{}
We define the stochastic mixture $\pi_\lambda$ as follows.
Before observing $(X,S)$, a Bernoulli random variable
$Z\sim\mathrm{Bernoulli}(\lambda)$ is sampled.
If $Z=1$, decisions are generated according to $\pi_1$; if $Z=0$, decisions are
generated according to $\pi_2$.

Since this randomization occurs prior to outcome realization, the induced joint
distribution of decision–outcome pairs satisfies:
\[
\mathbb{P}_{\pi_\lambda}(D=d,Y=y)
=
\lambda\,\mathbb{P}_{\pi_1}(D=d,Y=y)
+
(1-\lambda)\,\mathbb{P}_{\pi_2}(D=d,Y=y),
\]
and analogously for conditional distributions given $S=s$.

By assumption, the decision-maker and decision subject utilities are affine in the joint decision-outcome
distribution and substituting this relation into the equations (\ref{eq:udm_affine}) and (\ref{eq:uds_affine})
we obtain:
\[
U_{DM}(\pi_\lambda)
=
\lambda U_{DM}(\pi_1)+(1-\lambda)U_{DM}(\pi_2),
\]
and
\[
U_{DS}(\pi_\lambda)
=
\lambda U_{DS}(\pi_1)+(1-\lambda)U_{DS}(\pi_2),
\]

The SP evaluates fairness as:
\[
U_{SP}(\pi)=\mathcal R\!\left(U_{DS}(\pi)\right).
\]
If $\mathcal R$ is convex, Jensen’s inequality yields:

\[
\begin{aligned}
U_{SP}(\pi_\lambda)
&=
\mathcal R\!\left(\lambda U_{DS}(\pi_1)+(1-\lambda)U_{DS}(\pi_2)\right) \\
&\le
\lambda \mathcal R(U_{DS}(\pi_1))+(1-\lambda)\mathcal R(U_{DS}(\pi_2)).
\end{aligned}
\]

and the inequality reverses if $\mathcal R$ is concave.
Since the social planner minimizes (resp. maximizes) $\mathcal R$, the stated
result follows.
\end{proof}

\begin{lemma}[Curvature of Trade-offs]
\label{lem:curvature}
Let $\tau \mapsto (x(\tau),y(\tau))$ be a continuously parameterized curve with
$x(\tau)$ strictly monotone. The curve is convex if 
$\frac{d y}{d x}$ is non-increasing in $\tau$, and concave if 
$\frac{d y}{d x}$ is non-decreasing. If this monotonicity fails, the curve violates
the corresponding curvature condition.

For deterministic policies parameterized by $\tau$ (decision threshold):
\[
x(\tau) := U_{DM}(\pi_\tau), 
\qquad 
y(\tau) := U_{SP}(\pi_\tau),
\]
so that the $PF_{det}$ is traced by the parametric curve
$\tau \mapsto (x(\tau), y(\tau))$ in utility space.
\end{lemma}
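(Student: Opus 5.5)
The lemma is largely definitional: it fixes what ``convex'' and ``concave'' mean for a parametric trade-off curve and then specializes to the deterministic front. The plan is to reduce the first part to the classical single-variable statement by reparameterizing the curve by its abscissa. The second part is bookkeeping.

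\emph{Reduction to a graph.} Since $x(\tau)$ is continuous and strictly monotone on its parameter interval, it is a homeomorphism onto its image interval $I$. Its inverse $\tau(x)$ is therefore continuous and strictly monotone as well. The curve is then the graph of the function $g := y\circ\tau$ on $I$, and convexity (concavity) of the curve means convexity (concavity) of $g$. Where $x$ and $y$ are differentiable with $x'(\tau)\neq 0$, the chain rule gives
\[
g'(x(\tau)) = \frac{dy}{dx} = \frac{y'(\tau)}{x'(\tau)}.
\]
Where they are merely continuous, as for a front built from empirical thresholds, I would instead work with one-sided difference quotients of $g$. This is the step I expect to need the most care, because the stated monotone-slope criterion must remain meaningful at kinks.

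\emph{Slope monotonicity and curvature.} I would then apply the standard characterization: a function on an interval is convex if and only if its (one-sided) derivative, equivalently its chord slopes, is non-decreasing in $x$, and concave if and only if it is non-increasing. To transfer this to monotonicity in $\tau$, I would compose with $\tau(x)$. If $x$ is increasing in $\tau$, monotonicity in $\tau$ and in $x$ have the same direction. If $x$ is decreasing, the direction flips.

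This is exactly where the lemma's stated orientation (``convex if $dy/dx$ is non-increasing in $\tau$'') must come from. It holds under the convention that $x=U_{DM}(\pi_\tau)$ is decreasing in $\tau$ along the front, which is the relevant regime: raising the threshold beyond the DM-optimal one lowers DM utility. I would state this orientation convention explicitly. The converse (``if monotonicity fails, the curvature condition fails'') is the contrapositive of the ``only if'' direction of the same characterization. Concretely, if the slope moves in the wrong direction between $\tau_1<\tau_2$, then some chord lies on the wrong side of the graph.

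\emph{Specialization to deterministic policies.} For the second part I would simply note that each threshold $\tau$ gives a policy $\pi_\tau$. By the affine representation, $U_{DM}(\pi_\tau)$ is an integral of the score distribution over $\{h\ge\tau\}$, so it is continuous in $\tau$ when $h$ has a continuous distribution. Composing with the continuous map $\mathcal R$ shows that $U_{SP}(\pi_\tau)$ is also continuous. Hence $\tau\mapsto(x(\tau),y(\tau))$ is a continuous curve, and $PF_{\det}$ is its non-dominated portion.

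On any sub-interval where $x$ is strictly monotone, the first part applies directly. On a flat piece of $x$, only one point can be non-dominated, so restricting to such sub-intervals loses nothing. Combined with Lemma~\ref{lem:affine_curved}, which shows that mixtures realize chords, this sets up the use of the curvature criterion for Proposition~\ref{prop:hull}.
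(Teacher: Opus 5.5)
Your proposal is correct and shares the paper's skeleton: write the curve as a graph $y=g(x)$ through the inverse of $x(\tau)$, and use $dy/dx = y'(\tau)/x'(\tau)$. It differs usefully in two places.

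\textbf{Where the paper's sign step slips.} The paper uses the second-order identity $\frac{d^2y}{dx^2}=\frac{1}{x'(\tau)}\frac{d}{d\tau}\bigl(\frac{dy}{dx}\bigr)$. It then invokes only that ``$x'$ has constant sign'' to conclude $d^2y/dx^2\le 0$, which it calls convex. The sign of $d^2y/dx^2$ depends on the sign of $x'$, and that argument never tracks it. The step tacitly assumes $x'>0$, and with $x'>0$ the inequality $d^2y/dx^2\le 0$ is concavity in the usual sense. The same slip recurs in the converse. So the orientation stated in the lemma is not really established there.

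\textbf{What your first-order route buys.} You use the characterization ``convex iff the (one-sided) derivative, or chord slope, is non-decreasing in $x$'' and transfer monotonicity through $\tau(x)$. This needs only one-sided derivatives rather than twice-differentiability, which suits fronts built from empirical thresholds. More importantly, it isolates exactly the hypothesis the stated direction requires: $x$ must be decreasing in $\tau$. Your treatment of the specialization (continuity of $U_{DM}(\pi_\tau)$ and $U_{SP}(\pi_\tau)$, and restriction to strictly monotone branches) also goes beyond the paper, which treats that part as pure notation.

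\textbf{What to tighten.} Your justification that this orientation is ``the relevant regime'' is not automatic across the whole threshold range. Below the DM-optimal threshold, $U_{DM}(\pi_\tau)$ increases in $\tau$, and a Pareto front may draw on that branch; there the criterion flips. Impose the convention explicitly on such a branch, either by reparameterizing ($\tau\mapsto-\tau$) or by stating the criterion as monotonicity of $dy/dx$ in $x$, which is orientation-free.
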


\begin{proof}\renewcommand{\qedsymbol}{}
Because $x'(\tau)\neq 0$, the curve can locally be written as the graph of a
function $y=y(x)$.
By the chain rule:
\[
\frac{d y}{d x}
=
\frac{y'(\tau)}{x'(\tau)}.
\]

The second derivative of $y$ with respect to $x$ is given by:
\[
\frac{d^2 y}{d x^2}
=
\frac{d}{d\tau}\!\left(\frac{d y}{d x}\right)\Big/ x'(\tau)
=
\frac{1}{x'(\tau)}\,\frac{d}{d\tau}\!\left(\frac{y'(\tau)}{x'(\tau)}\right).
\]

If $\frac{d y}{d x}$ is non-increasing in $\tau$, then
$\frac{d}{d\tau}(\frac{d y}{d x})\le 0$.
Since $x'(\tau)$ has constant sign, this implies
$\frac{d^2 y}{d x^2}\le 0$, and hence the curve is convex.

Conversely, if the curve is convex, then $\frac{d^2 y}{d x^2}\le 0$, which implies
$\frac{d}{d\tau}(\frac{d y}{d x})\le 0$, so $\frac{d y}{d x}$ is non-increasing.

The argument for concavity is identical with the inequalities reversed.
\end{proof}

\subsection*{Proof of Proposition~\ref{prop:hull}}

\begin{proof}[Proof Proposition~\ref{prop:hull}]\renewcommand{\qedsymbol}{}

By Lemma~\ref{lem:affine_curved}, stochastic policies generate convex combinations
of deterministic performance values $U_{DM}$, while the fairness coordinate
$U_{SP}$ is improved  according to the curvature of $\mathcal R$.

If $\mathcal R$ is convex and minimized, every stochastic mixture of deterministic
policies yields a point lying weakly below the line segment connecting their
utility pairs. Thus the set of attainable Pareto-optimal points coincides with the
lower convex hull of the $PF_{det}$.
If $\mathcal R$ is concave and maximized, the same argument yields the upper concave
hull.

Conversely, every point on the corresponding hull can be realized by an
appropriate stochastic mixture of deterministic policies. Hence,
\[
\mathrm{PF}_{stoch} = \mathrm{Hull}(\mathrm{PF}_{det}),
\]
where the hull is lower convex or upper concave depending on $\mathcal R$.
Stochastic policies strictly expand the Pareto front if and only if the
deterministic front violates the curvature implied by $\mathcal R$.
\end{proof}

\subsection*{Proof of Corollary~\ref{cor:egal} (Egalitarian)}

\begin{proof}\renewcommand{\qedsymbol}{}
Under Egalitarian fairness, $\mathcal R$ is convex and minimized. By
Proposition~\ref{prop:hull}, stochastic policies expand the Pareto front if and
only if the $PF_{det}$ is non-convex.

Deterministic policies are parameterized by a scalar $\tau$, inducing a trade-off
curve $\tau\mapsto(U_{DM}(\pi_\tau),U_{SP}(\pi_\tau))$. Strong asymmetry in DM utilities
($|C^{DM}_{11}|\gg|C^{DM}_{10}|$) implies that small changes in $\tau$ produce
disproportionately large changes in $U_{DM}$ relative to $U_{DS}$, causing the
marginal rate $\frac{dU_{SP}}{dU_{DM}}$ to vary non-monotonically.
By Lemma~\ref{lem:curvature}, this violates convexity.
As DM utilities become near-symmetric, the marginal trade-offs vary smoothly and
monotonically, yielding an approximately convex $PF_{det}$. In this regime, the lower convex hull coincides with $\mathrm{PF}_{det}$ and stochastic
policies offer no additional Pareto-optimal points.
\end{proof}

\subsection*{Proof of Corollary~\ref{cor:rawls} (Rawlsian)}

\begin{proof}\renewcommand{\qedsymbol}{}
Under Rawlsian fairness, $\mathcal R(U_{DS})=\min_s U_{DS}^s$ is concave and
maximized. By Proposition~\ref{prop:hull}, stochastic policies expand the Pareto
front if and only if the $PF_{det}$ violates concavity.

When DM utilities are near-symmetric, variations in $\tau$ induce smooth,
monotone changes in both $U_{DM}$ and the worst-off group utility, ensuring that
$\frac{dU_{SP}}{dU_{DM}}$ is non-decreasing and the $PF_{det}$ is
approximately concave.
If DM utilities are strongly asymmetric and misaligned with at least one group,
increases in $U_{DM}$ may sharply reduce $\min_s U_{DS}^s$, causing local decreases
in $\frac{dU_{SP}}{dU_{DM}}$. By Lemma~\ref{lem:curvature}, concavity is violated,
and the upper concave hull strictly improves upon the $PF_{det}$.
These improvements are attainable via stochastic policies by
Proposition~\ref{prop:hull}.
\end{proof}

\subsection*{Group-Specific Policies.}
The preceding analysis extends naturally to \emph{group-specific} policies. In this case, the policy class decomposes across groups, and each group $s$ induces its own decision-outcome distribution and utility pair $(U_{DM}^s, U_{DS}^s)$.
As a result, the attainable utility region under group-specific policies is a superset of that attainable under shared policies. \TightenPar{1}

From a \emph{geometric perspective}, allowing group-specific parameters relaxes the coupling between groups imposed by shared thresholds, thereby enlarging the feasible set of joint decision-outcome distributions.
This relaxation can mitigate curvature violations in the $PF_{det}$ by enabling group-level adjustments that align more closely with stakeholder utilities.
Consequently, group-specific deterministic policies may already yield well-shaped (convex or concave) Pareto fronts in regimes where shared deterministic policies do not.

Importantly, Proposition~\ref{prop:hull} continues to apply within the group-specific policy class: stochastic group-specific policies generate convex combinations of group-specific deterministic outcomes and therefore recover the corresponding hull of the deterministic front.
Empirically, this implies that while stochasticity remains beneficial in the presence of curvature violations, the marginal gains from stochasticity are often smaller for group-specific policies, as some geometric limitations are already alleviated by conditioning on group membership.

\section{Pseudo Algorithms}
\label{app:algorithms}

\begin{figure}[H]
\centering

\begin{minipage}[t]{0.485\textwidth}
\captionof{algorithm}{\texttt{get\_pareto\_front}}
\label{alg:get_pareto}
\small
\textbf{Input}: Set of points $P=\{(x,y)\}$ where $x$ = performance (maximize), $y$ = unfairness gap (minimize)\\
\textbf{Output}: Pareto front $P^*$
\begin{algorithmic}[1]
    \State Sort $P$ by \textbf{performance descending} and then by \textbf{unfairness ascending}:
    \[
        P_{\text{sorted}} \gets \text{sort}(P, \text{key}=(-x, y))
    \]
    \State Initialize Pareto front list: $P^* \gets [\ ]$
    \State Initialize best fairness seen: $\texttt{min\_fair} \gets +\infty$
    \For{\textbf{each} $(x, y)$ in $P_{\text{sorted}}$}
        \If{$y < \texttt{min\_fair}$}
            \State Append $(x, y)$ to $P^*$
            \State $\texttt{min\_fair} \gets y$
        \EndIf
    \EndFor
    \State \textbf{return} $P^*$
   
\end{algorithmic}

\end{minipage}
\hfill
\begin{minipage}[t]{0.485\textwidth}
\captionof{algorithm}{\texttt{compute\_hypervolume}}
\label{alg:hypervolume}
\small
\textbf{Input}: Sorted Pareto points $P=\{(x_i,y_i)\}$, reference point $r=(r_x,r_y)$\\
\textbf{Output}: Hypervolume $hv(P,r)$
\begin{algorithmic}[1]
    \State $hv \gets 0.0$
    \For{$i \gets 0$ \textbf{to} $|P|-1$}
        \State $(x_i, y_i) \gets P[i]$
        \If{$i > 0$}
            \State $y_{\text{next}} \gets P[i-1].y$
        \Else
            \State $y_{\text{next}} \gets r_y$
        \EndIf
        \State $width \gets x_i - r_x$
        \State $height \gets y_{\text{next}} - y_i$
        \State $area \gets width \times height$
        \State $hv \gets hv + area$
    \EndFor
    \State \textbf{return} $hv$
\end{algorithmic}
\end{minipage}

\end{figure}

\section{Implementation Details} \label{app:datasets}
\subsection{Experimental Details}\label{app:exp_details}

In our experiments, we explored policy configurations using a comprehensive, brute-force sweep over the relevant hyperparameters. For all policies, we evaluated 100 threshold values denoted by $\gamma$ for stochastic policies and $\tau$ for deterministic policies uniformly spaced in the interval $(0.01, 0.99)$ using a linear grid (i.e., a \texttt{linspace}).
For stochastic policies, we additionally varied the stochasticity parameter $\beta$ over the set: $\{1000000, 500,100,50,30,10,5, 1, 2, 0.5\}$, where larger values of $\beta$ correspond to increasingly deterministic behavior.
For group-specific policies, the policy class consists of $N$ separate policies, where $|\mathcal S|$ denotes the number of sensitive groups. Consequently, we independently swept all threshold and stochasticity configurations for each group. This results in $N \times (\text{number of thresholds}) \times (\text{number of }\beta\text{ values})$ policy instances (i.e., $|\mathcal S| \times 100 \times 10$ in our experiments), for group-based stochastic policies. While this configuration sweep incurs a substantial computational cost, the overall runtime can be significantly reduced through parallel processing, as individual policy evaluations are independent and can be executed concurrently. In all of our experiments, we consider $|\mathcal S|=2$ groups. Across all configurations, stakeholder utilities are computed using fixed, dataset-specific constants. Our theoretical results are not tied to these particular choices; while different constants may affect the magnitude of improvements, the key insights follow from a convexity analysis determined by the asymmetry and alignment of the utility constants.

For the computation of the hypervolume (HV) and normalized hypervolume (nHV), both the utopia and reference points are required. While the utopia point is typically defined theoretically, in practice we approximate it using the best utility values observed across the combined Pareto fronts of all evaluated policies. To ensure a fair and consistent comparison, we therefore define the nadir point using the worst utility values observed across the Pareto fronts of all policies considered jointly. Additionally, to obtain the reference point, we slightly perturb the nadir point by moving it away from the utopia point in each dimension, which helps avoid numerical issues in the HV computation.

\subsection{Synthetic Credit Dataset} \label{app:syntheticloan}

We construct a synthetic dataset modeled after a credit-lending~\cite{hofmann1994statlog} dataset. The data is generated using the causal model adapted from~\cite{karimi2020algorithmic}, consisting of seven variables that include both root causes and mediating factors. Gender ($G$) and age ($A$) serve as root nodes in the causal graph, influencing education ($E$), income ($I$), savings ($S$), loan amount ($L$) and loan duration ($D$) through complex, nonlinear, and non-additive relationships. For example, savings ($S$) depends on income ($I$), and loan amount ($L$) is further influenced by $S$, mimicking real-world financial dependencies.
The outcome variable $Y$ (repayment) is a nonlinear function of loan amount $L$, duration $D$, and the interaction between income $I$ and savings $S$. The complete data-generation mechanism (DGM) is defined as follows: 
{
\begin{equation*}
\begin{aligned}%
    &f_G : G =U_{G}, \quad   U_{G}~\sim \text{$Bern$}(\text{Bias})\\
    &f_A : A=-35+U_{A},\quad    U_{A}~\sim \operatorname{Gamma}(10,3.5)\\
    &f_E : E=-0.5+\sigma(-1+0.5 G+\sigma(0.1 A)+U_{E}),\\ & \quad \quad U_{E}\sim\mathcal{N}(0,0.25)\\
    &f_I : I=-4+0.1(A+35)+2 G+G E+U_{I}, \\ & \quad \quad
     U_{I}~\sim~\mathcal{N}(0,4)\\
    &f_S : S=-4+ 1.5 \mathds{1}\{I > 0\} I+U_{S}, 
    U_{S}~\sim~\mathcal{N}(0,5) \\
    &f_L : L= 1+0.01(A-5)(5-A)+ 2(1-G) + \beta S  +U_{L}, \\
    & \quad \quad U_{L}~\sim~\mathcal{N}(0,10), \beta \in \{0, 0.03\} \\
    &f_D : D=-1+0.1 A + 3 (1-G) +L+U_{D}, U_{D}~\sim~\mathcal{N}(0,9)\\
     &f_Y : Y=\text{$Bern$}\{\sigma(\delta (-L -D) + 0.3 (I+ S + \alpha I  S ) + U_Y)\}  \\
     \quad & \quad \quad U_{Y}~\sim~\mathcal{N}(0,2)
     \\
      & \alpha = 
\begin{cases}
1, & \text{if } \mathds{1}\{I > 0\} \land \mathds{1}\{S > 0\} = 1 \\
-1, & \text{otherwise}
\end{cases} \\
\end{aligned}
\end{equation*}
where $\mathds{1}$ denotes the indicator function.
}

Since the DGM is fully specified, we use the true probabilities to make decisions. The constants used to formulate the utility functions for both  the decision maker and the decision subject are based on the decisions ($D$) and the outcomes ($Y$):

{
\[
C_{DM} = 
\begin{array}{c|cc}
D \backslash Y & Y=0 & Y=1 \\
\hline
D=0 & 0 & 0 \\
D=1 & 0.1\cdot\mu_L = -0.4431 
    & \mu_L\cdot\mu_D\cdot r = 28.5473 \\
\end{array}
\]
}

\[
C_{DS} = 
\begin{array}{c|cc}
D \backslash Y & Y=0 & Y=1 \\
\hline
D=0 & 0 & -1 \\
D=1 & -5 & 10 \\
\end{array}
\]
where: 
\[
\mu_L := \mathrm{mean}(L), \quad 
\mu_D := \mathrm{mean}(D), \quad 
r := \mathrm{rate}.
\]
The DM and DS utility constants are inspired from~\cite{hertweck2023justice}.

\subsection{Synthetic Hiring Dataset} \label{app:synthetichiring}

This synthetic hiring dataset models candidate features and hiring outcomes using a combination of features sensitive attributes and covariates based on the framework in~\cite{binkyte2022need,binkyte_2025_16359243}. Age and gender are sensitive attributes, while education, years of experience, and previous companies serve as covariates. The interview score is considered the treatment variable influencing the hiring decision. The definitions of the variables and the corresponding data-generation mechanism are provided as follows:

\begin{align*}
   \text{Age:} \quad  U_A &\sim \text{Gamma}(10, 3.5)\ %
   \\
   \text{Gender:} \quad U_G &\sim \text{Bernoulli}(0.5) \\
   \text{Education:} \quad U_E &\sim \mathcal{N}(0, 0.25) \\
   \text{Previous Companies:} \quad U_P &\sim \text{Poisson}(2) \\
   \text{Experience Years:} \quad U_{EY} &\sim \mathcal{N}(0, 3) \\
   \text{Interview Score:} \quad U_I &\sim \mathcal{N}(0, 1)
\end{align*}

\begin{align*}
    A &:= -35 + U_A  \\
    G &:= UG \quad \text{(Gender, binary: 0 or 1)} \\
    E &:= -0.5 + \sigma(-1 + 0.5 G + \sigma(0.1  A) + U_E)  \\
    P &:= \left\lfloor 3.5 + 0.2 A - 0.25  (1 - G) + E + U_P \right\rfloor \\
   EY &:= \min\Big( \max\big(0,\ 0.7A + 15 - 5E - 2(1 - G) + U_{EY} \big),\ A + 15 \Big) \\
    I &:= 10 \cdot \sigma\left(-2.5 - 0.3  (1 - G) + 2  E + 0.2  EY + U_I \right) \quad  \\
    Y &:= \sigma\left( -4 + 0.5 I - \sigma(0.1  P) + 0.1  EY + 0.5 \cdot E \right) 
\end{align*}

Since, this is a synthetic dataset, we have access to the true outcome probabilities. Hence, we use the probabilities generated by the data-generating mechanism to make decisions. The constants associated with different stakeholders are as follows:

\[
C_{DM} = \;
\begin{array}{c|cc}
D \backslash Y & Y=0 & Y=1 \\
\hline
D=0 & 0 & 0 \\
D=1 & 2.5 & 50 \\
\end{array}\]

\[
C_{DS} = \;
\begin{array}{c|cc}
D \backslash Y & Y=0 & Y=1 \\
\hline
D=0 & 0 & 0 \\
D=1 & -4 & 8 \\
\end{array}
\]

The constants are chosen to encode three modeling assumptions. First, for the DM, the gain from a successful hire is much larger than the loss from an unsuccessful hire, reflecting the high value of a productive employee relative to the cost of a hiring mistake. Second, for DS, hiring is welfare-improving when the candidate is qualified and welfare-reducing when the candidate is not, capturing the possibility of mismatch harms. Third, the constants are scaled so that neither stakeholder’s utility trivially dominates the other, thereby producing a non-degenerate Pareto front. Thus, the values are intended to be illustrative and to induce meaningful trade-offs, rather than to represent a uniquely calibrated ground truth.

\subsection{German Credit Dataset}
In addition to synthetic datasets, we evaluate our approach on the German Credit Dataset~\cite{hofmann1994statlog}, a widely used benchmark in fairness research. The dataset consists of $1,000$ credit applications with $20$ personal and financial attributes, with a binary label indicating whether an applicant is considered creditworthy. We focus on gender as the sensitive attribute, binarized as $G=0$ (female) and $G=1$ (male). 

We obtained the dataset from the Statlog\footnote{https://archive.ics.uci.edu/dataset/144/statlog+german+credit+data} project and, following prior work~\cite{majumdar2025causal}, we restructured the personal status feature to extract binary gender categories. 
Specifically, the categories “male single” and “male divorced/separated” were grouped into one as male, while “female single” and “female divorced/separated” were grouped as female. This preprocessing enables a clearer analysis of gender-based disparities while preserving real-world decision-making structure. The dataset is particularly relevant to fairness research in financial domains, where misclassifications can lead to socially significant harms. 

For real-world datasets we decouple prediction from decision-making, as true outcome probabilities are not directly observable. Accordingly, we train a simple neural network classifier to estimate outcome probabilities, which are subsequently treated as fixed inputs to the decision-making stage. The classifier is trained using standard cross-entropy loss and outputs a scalar probability score for each instance. For the German Credit dataset, we train a lightweight single-hidden-layer neural network with $124$ hidden units to estimate outcome probabilities. Numerical features are scaled to the $(0,1)$ range to improve training stability and performance. After training for 100 epochs, the model achieved a training accuracy of 98\% and a test accuracy of 72\%. 

\[
C_{DM} = \;
\begin{array}{c|cc}
 D \backslash Y  & Y=0 & Y=1 \\
\hline
D=0 & 0 & 0 \\
D=1 & L*0.01 & L*D/12*rate \\
\end{array}
\]
\[
C_{DS} = \;
\begin{array}{c|cc}
 D \backslash Y & Y=0 & Y=1 \\
\hline
D=0 & 0 & -1 \\
D=1 & -5 & 10 \\
\end{array}
\]
The DM and DS utility constants are taken from~\cite{hertweck2023justice}. 

\subsection{Home Credit Dataset}

The Home Credit dataset was obtained from a Kaggle competition~\cite{homecredit}. Following the preprocessing pipeline described in in~\cite{majumdar2025causal}, we utilized the application training data and selected features based on the feature importance as seen in~\cite{matthyspredicting}. In particular, we adopted the features identified as most relevant by the developed random forest classifier and information entropy-based models. The selected features are :  ``EXT SOURCE 2, EXT SOURCE 3, EXT SOURCE 1, DAYS BIRTH, NAME EDUCATION TYPE, CODE GENDER, DAYS EMPLOYED, NAME INCOME TYPE, ORGANIZATION TYPE, AMT CREDIT, AMT ANNUITY, AMT GOODS PRICE, REGION POPULATION RELATIVE" and the target variable "TARGET".
After selecting this subset, all data points with missing values were removed. The variables ``DAYS BIRTH" and ``DAYS EMPLOYED", originally recorded as negative values in days, were converted to years by dividing by ${-365}$. The resulting dataset consists of $98,859$ samples with $14$ features.
For this dataset, we trained a lightweight feedforward neural network with two hidden layers of 64 and 32 units, respectively, with ReLU activation functions. The model was trained for 100 epochs, the model achieved a training accuracy of 93\% and a test accuracy of 92\%. The constants for stakeholders for formulation of the utilities are given as follows:

\[
C_{DM} = \;
\begin{array}{c|cc}
 D \backslash Y  & Y=0 & Y=1 \\
\hline
D=0 & 0 & 0 \\
D=1 & L*0.001 & A*12*15 -L  \\
\end{array}
\]
\[
C_{DS} = \;
\begin{array}{c|cc}
 D \backslash Y & Y=0 & Y=1 \\
\hline
D=0 & 0 & -10 \\
D=1 & -50 & 100 \\
\end{array}
\]

The DM and DS utility constants are scaled version of the constants ~\cite{hertweck2023justice} for German credit dataset to match the scale of the dataset. 

\subsection{MIMIC-III Sepsis Dataset}

The MIMIC-III-sepsis dataset~\cite{Hou2020} is a subset of the MIMIC-III database, containing ICU patient records from Beth Israel Deaconess Medical Center spanning 2001 to 2012. MIMIC-III-sepsis includes patients aged between 18 and 89, who spent at least 24 hours in ICU and were diagnosed with "sepsis", "severe sepsis" or "septic shock". Features with more than $20\%$ missing values were excluded, while the remaining features' missing values were imputed. A more detailed description on the data extraction process and preprocessing can be found in \cite{Hou2020}. 
 Categorical features were encoded using one-hot encoding and continuous features are rescaled to the range $(0,1)$. The final dataset consists of $4,559$ samples and $101$ features, capturing demographic information, vital signs and laboratory measurements. In this work, we focus on the treatment variable \texttt{vent} indicating whether a patient was mechanically ventilated, and the target variable \texttt{hospital\_expire\_flag} indicating in-hospital mortality.
To estimate outcome probabilities, we train a small two-hidden-layer neural network (124 and 64 units), which we regularize using dropout of 50\% probability on the output of the second hidden layer. After training for 50 epochs, the model achieved a training accuracy of 90\% and a test accuracy of 85\%. The constants for stakeholder utility formulation are defined as follows: 
\[
C_{DM} = \;
\begin{array}{c|cc}
D \backslash Y & Y=0 & Y=1 \\
\hline
D=0 & -10 & 40 \\
D=1 & -30 & 160 \\
\end{array}
\]

\[
C_{DS} = \;
\begin{array}{c|cc}
D \backslash Y & Y=0 & Y=1 \\
\hline
D=0 & -20 & 100 \\
D=1 & -40 & 100 \\
\end{array}
\]

Since utilities are not directly observed in the clinical settings, the constants are chosen to reflect plausible treatment trade-offs. For the DM, survival yields high benefit, particularly when achieved through intervention (
$C_{DM}(1,1)=160 > C_{DM}(0,1)=40)$, while mortality is costly, with higher penalty under treatment due to resource use and intervention risk ($C_{DM}(1,0) = -30 < C_{DM}(0,0) = -10$). For the patient, survival is highly valued regardless of treatment 
$C_{DS}(0,1)=C_{DS}(1,1)=100)$, while treatment imposes additional burden when unsuccessful $C_{DS}(1,0)= - 40$ vs. $C_{DS}(0,0)= - 20$. These constants encode asymmetric benefits and harms of treatment and are chosen to induce meaningful performance-fairness trade-offs.

\section{Experimental Analysis of Stochastic Policy Parameters}\label{app:stochasticity}

In this section, we further study the behavior of stochastic policies. Specifically, in  Figure~\ref{fig:app_betaheatmaps}, we plot the utility values achieved by each shared stochastic policy, color-coded by its $\beta$ value. Unlike Figure~\ref{fig:beta_heatmap_main}, however, here we include \textbf{all} shared stochastic policies evaluated in our experiments, rather than focusing only on the PFs. 

Across most datasets, and under both Egalitarian and Rawlsian utility objectives, we observe that policies naturally form clusters according to their level of stochasticity. In particular, as stochasticity increases (i.e., moving from cooler to warmer colors), these clusters become progressively smaller, indicating that the choice of threshold becomes less important. Intuitively, higher stochasticity smooths the decision boundary, reducing sensitivity to the exact threshold value. \TightenPar{1}

More importantly, under the Egalitarian utility, we observe that increasing stochasticity often leads to improved trade-offs, especially in the middle region of the PF between the two extreme (i.e., where only one utility is optimal) points. We believe that this effect arises because stochasticity helps with cases that exhibit high outcome uncertainty. 

In contrast, under the Rawlsian utility, extreme-levels of stochasticity seem to restrict the utilities, resulting in worse trade-offs. In fact, for Rawlsian objectives, the threshold parameter appears to be very important: near-deterministic policies can yield both the worst outcomes (bottom-left corner) and best (top-right corner) outcomes, depending on the chosen threshold. These behaviors are not observed for the MIMIC dataset.

Nevertheless, in all settings, the family of stochastic policies contains deterministic policies, as a special case. As a result, even in scenarios where stochasticity is detrimental, our policy sweep still recovers the optimal near-deterministic solutions.
As a side note, these observations suggest a potential opportunity for improving computational efficiency. Since thresholds matter less as policies become more stochastic, one could reduce the number of threshold values evaluated at higher stochasticity levels, while using a finer threshold grid for near-deterministic policies where threshold choice is more critical.

\begin{figure*}
    \centering
    \includegraphics[width=1\linewidth]{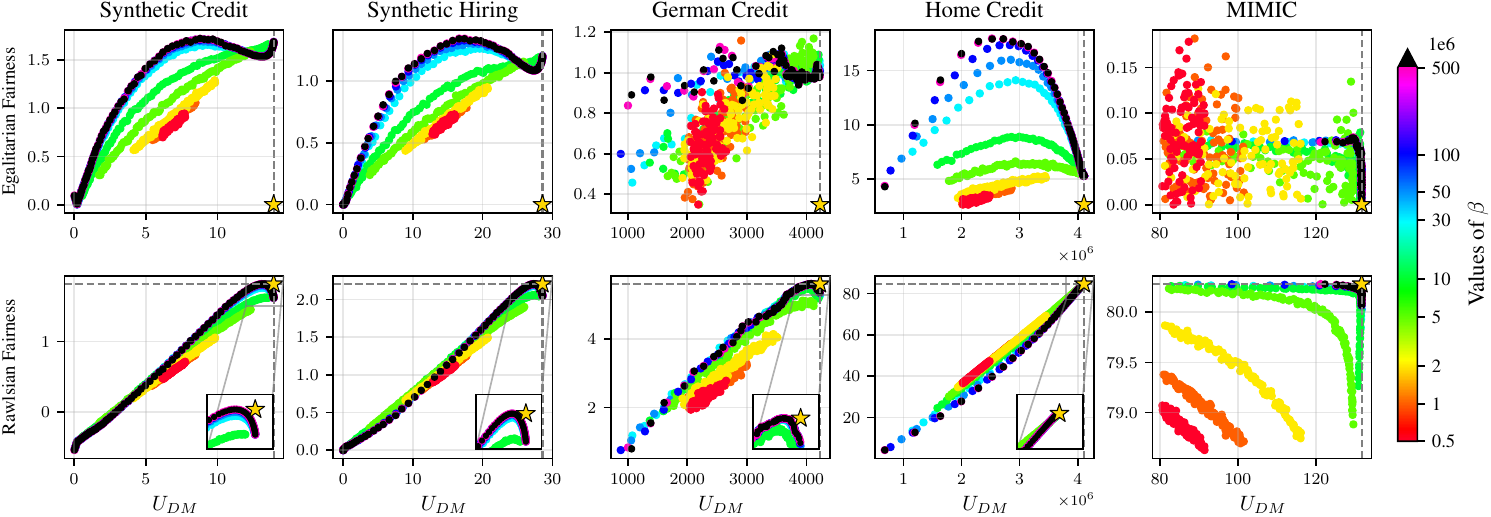}
    \caption{Shared stochastic policies (colored by $\beta$). Unlike Figure~\ref{fig:beta_heatmap_main}, all evaluated shared stochastic policies are visualized. \TightenPar{1}}
    \label{fig:app_betaheatmaps}
\end{figure*}

Finally, in Figure~\ref{fig:app-beta_decision}, we visualize the decision functions implemented by stochastic policies. These plots provide further intuition for our findings: as $\beta$ decreases and stochasticity increases, the decision functions become increasingly flat, explaining why the precise threshold plays a diminishing role in policy behavior.

\begin{figure*}
    \centering
    \includegraphics[width=0.4\linewidth]{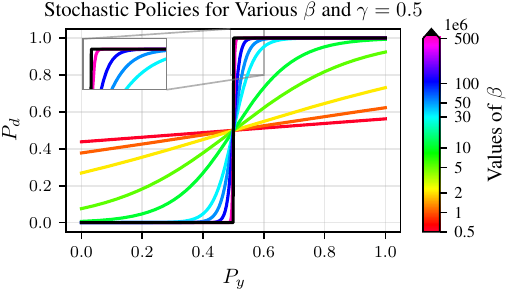}
    \caption{Decision function of stochastic policies across different $\beta$ values, for $\gamma = 0.5$.}
    \label{fig:app-beta_decision}
\end{figure*}

\section{Additional Results}
\label{app:extra_results}

In this section, we complement the results presented in Section~\ref{sec:experiments} by providing additional experimental results and visualizations that offer further insight into the behavior of the aforementioned policies.

\textbf{Hypervolume.} First, we report the (unnormalized) hypervolume (HV) values for all evaluated datasets and policy classes. These results are analogous to those shown in Table~\ref{tab:hv_auc_results}; however, instead of normalized hypervolume, we present the raw HV values. The unnormalized metric provides higher numerical resolution and reveals small differences that are sometimes obscured by normalization. In particular, in the normalized setting, deterministic and stochastic policies may occasionally appear to achieve identical HV values, whereas the unnormalized HV exposes performance gaps. The qualitative conclusions remain consistent with those of Table~\ref{tab:hv_auc_results}.

\begin{table*}[t]
\centering
\scriptsize
\begin{tabular}{ll|cc|cc|cc|cc}
\toprule

\multirow{3}{*}{\textbf{Dataset}} & \multirow{3}{*}{\textbf{Setting}} 
& \multicolumn{4}{c|}{\textbf{Egalitarian}} 
& \multicolumn{4}{c}{\textbf{Rawlsian}} \\
\cmidrule(lr){3-6}\cmidrule(lr){7-10}
& 
& \multicolumn{2}{c|}{\textbf{HV}~($\uparrow$)} 
& \multicolumn{2}{c|}{\textbf{HV$_{test}$}~($\uparrow$)} 
& \multicolumn{2}{c|}{\textbf{HV}~($\uparrow$)} 
& \multicolumn{2}{c}{\textbf{HV$_{test}$}~($\uparrow$)} \\
&
& \textbf{$PF_{det}$} & \textbf{$PF_{stoch}$} 
& \textbf{$PF_{det}$} & \textbf{$PF_{stoch}$} 
& \textbf{$PF_{det}$} & \textbf{$PF_{stoch}$} 
& \textbf{$PF_{det}$} & \textbf{$PF_{stoch}$} 
\\

\midrule
\multirow{2}{*}{Synthetic Credit} & shared & 6.14& 11.23& 6.46& 11.80 & 0.1811& 0.1814& 0.1950& 0.1954\\
     & group & 19.69& 20.90& 20.03& 21.67 & 0.1878& 0.1885& 0.2004& 0.2011\\
    \midrule
    \multirow{2}{*}{Synthetic Hiring} & shared & 8.71& 16.86& 8.44& 16.19 & 0.2264& 0.2268& 0.1999& 0.2005\\
     & group & 31.03& 32.27& 29.95& 30.97 & 0.2378& 0.2385& 0.2078& 0.2088\\
    \midrule
    \multirow{2}{*}{German} & shared & 420.97& 815.36& 285.20& 300.24 & 47.02& 47.91& 47.35& 47.67\\
     & group & 1876.82& 2094.06& 613.99& 926.37 & 57.31& 58.79& 58.25& 57.78\\
    \midrule
    \multirow{2}{*}{HomeCredit} & shared & $2.79 \times 10^{5}$& $5.74 \times 10^{6}$& $2.15 \times 10^{5}$& $5.45 \times 10^{6}$ & $1.24 \times 10^{4}$& $1.24 \times 10^{4}$& 313.33& 314.70\\
     & group & $1.79 \times 10^{7}$& $1.80 \times 10^{7}$& $1.69 \times 10^{7}$& $1.76 \times 10^{7}$ & $1.25 \times 10^{4}$& $1.25 \times 10^{4}$& 346.88& 349.00\\
    \midrule
    \multirow{2}{*}{MIMIC} & shared & 3.75& 3.84& 4.92& 4.92 & 1.91& 1.92& 13.57& 13.83\\
     & group & 3.86& 3.86& 29.16& 29.63 & 2.04& 2.05& 14.47& 14.56\\
\bottomrule
\end{tabular}
\caption{{Unnormalized HV for different policy classes in utility space.} Higher HV values indicate broader Pareto-optimal trade-off regions.}
\label{tab:app_hv_auc_results}
\end{table*}

\textbf{Test-time PFs.} Next, we visualize the Pareto fronts (PFs) at test time. To obtain these, we evaluate all policies lying on the training PF on the test set and then recompute the resulting PF based on their test-time utilities. We observe that, in some cases, the test-time PF becomes narrower compared to the training PF, reflecting distributional shift or reduced generalization capacity.

Despite this contraction, the qualitative trends observed during training largely persist at test time. In particular, stochastic policies consistently outperform, or at least match, deterministic policies, and group-based policies generally outperform shared policies. 
Additionally, for MIMIC under the Egalitarian objective, the shared deterministic Pareto front collapses to a single point, further highlighting the limited expressivity of shared deterministic policies.

\begin{figure*}
    \centering
    \includegraphics[width=0.9\linewidth]{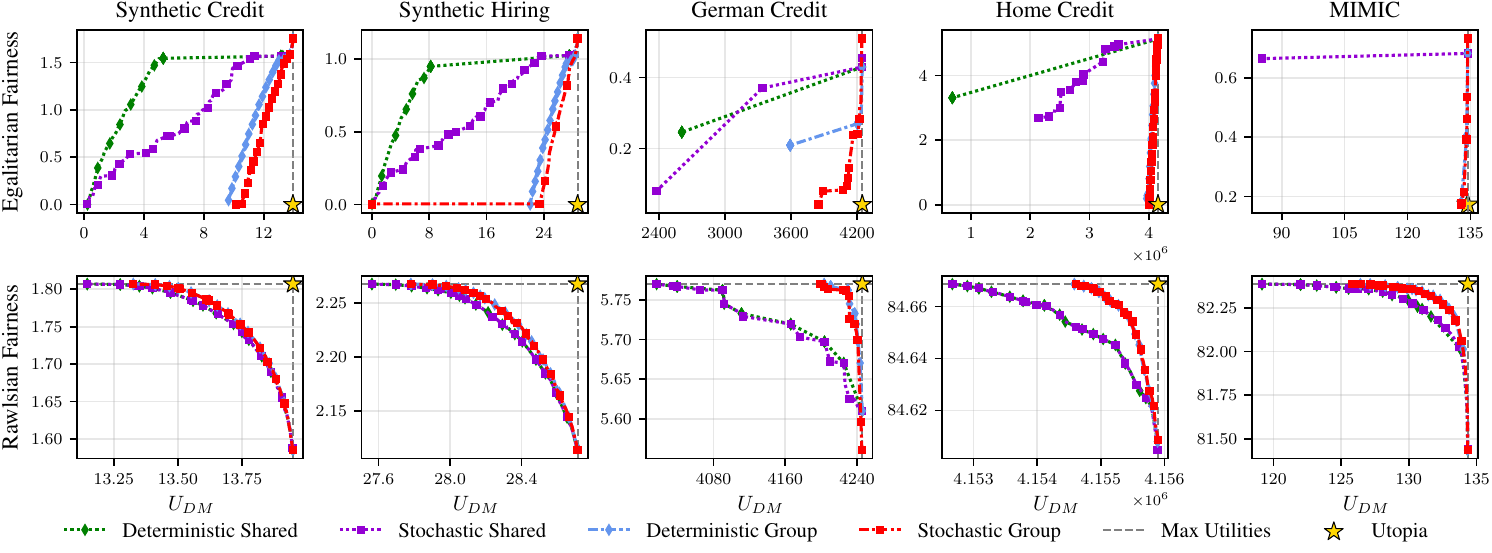}
    \caption{Test-time Pareto fronts, obtained by evaluating training Pareto-optimal policies on the test set and comparing deterministic and stochastic policies across performance-fairness trade-offs.}
    \label{fig:app_test_pareto}
\end{figure*}

\textbf{Fairness Gain.} Finally, in Figure~\ref{fig:app_aucgains} we visualize the point-wise fairness gain achieved by stochastic policies. This analysis requires interpolating between Pareto-optimal points, as deterministic policies often produce narrower or discontinuous PFs. As a result, \textbf{interpolation can occasionally yield negative fairness gains, which should be interpreted as artifacts of this approximation rather than true degradations in fairness}.

Overall, the trends mirror those observed in the main experiments. In the shared-policy setting, stochastic policies lead to improved fairness, particularly in regions where fairness is prioritized over the decision maker’s utility. In contrast, in the group-policy setting, stochasticity primarily improves fairness near the decision maker’s optimal utility. These results further reinforce the advantages of stochastic policies across different performance-fairness trade-offs.

\begin{figure*}
    \centering
\includegraphics[width=0.9\linewidth]{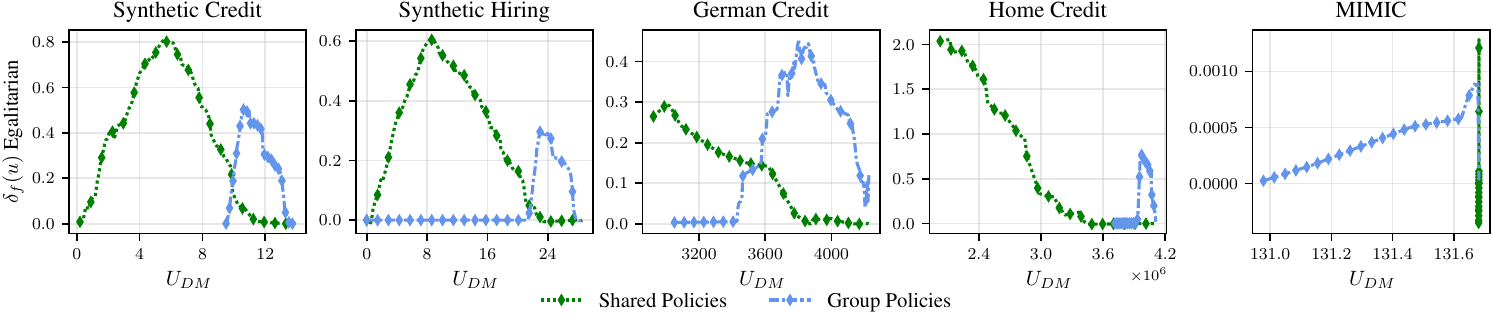}
    \caption{Fairness gain under Egalitarian justice achieved by switching from deterministic to stochastic policies, for both shared and group-specific policies at training time.}
    \label{fig:app_aucgains}
\end{figure*}

\textbf{Ablation on symmetry and alignment of utilities.}
Figure~\ref{fig:app_HVgainvsRatio} illustrates how hypervolume gains from stochastic policies vary with utility asymmetry and alignment. We compare \emph{aligned} and \emph{misaligned} settings that differ only in the DS utility constants, while sweeping the ratio $|C_{11}^{DM}| / |C_{10}^{DM}|$.
Across all panels, gains are negligible when $|C_{11}^{DM}| \leq |C_{10}^{DM}|$, indicating that stochasticity provides no benefit in low-asymmetry regimes. As the ratio increases beyond 1, clear improvements emerge.
Under the \emph{Egalitarian} objective, hypervolume gains grow steadily with asymmetry in both aligned and misaligned settings, and are nearly identical across them. This confirms that Egalitarian improvements depend primarily on asymmetry. Moreover, shared policies consistently achieve larger gains than group policies.
In contrast, under the \emph{Rawlsian} objective, gains are strongly conditioned on alignment. In the aligned case, improvements remain negligible across most ratios, with only minor increases at extreme asymmetry. However, in the misaligned setting, gains increase sharply with asymmetry, particularly for large ratios, and again are more pronounced for shared policies.
Overall, the figure highlights a clear separation: asymmetry alone is sufficient to induce gains under Egalitarianism, while Rawlsian improvements require both asymmetry and misalignment. These trends visually corroborate Corollaries~\ref{cor:egal} and~\ref{cor:rawls}.
\begin{figure}
    \centering
    \includegraphics[width=0.65\linewidth]{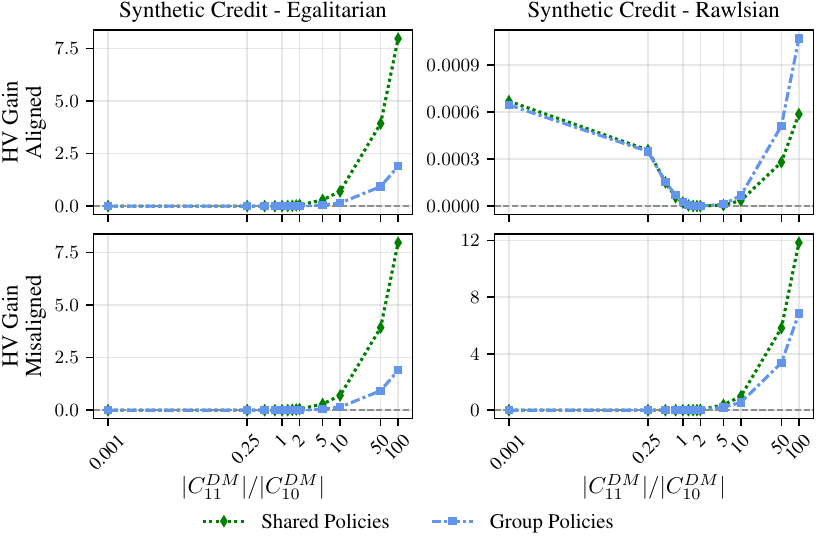}
    \caption{Hypervolume gain of stochastic policy over deterministic policies as a function of the utility symmetry ratio $|C_{11}^{DM}| / |C_{10}^{DM}|$ for both aligned and misaligned cases on the synthetic credit dataset. This figure complements Fig.~\ref{fig:HVgainVSratio}.}
    \label{fig:app_HVgainvsRatio}
\end{figure}

\end{document}